\newcommand{\R}{\mathbb{R}}
\newcommand{\BigO}[1]{\ensuremath{\mathcal{O}(#1)}}                             
\newcommand{\BigOm}[1]{\ensuremath{\Omega(#1)}}                                 
\newcommand{\vect}[1]{\ensuremath{\mathbf{#1}}}                                 
\newcommand{\vectsym}[1]{\ensuremath{\boldsymbol{#1}}}                          
\newcommand{\mat}[1]{\ensuremath{\mathbf{\MakeUppercase{#1}}}}                  
\newcommand{\Exp}[2]{\ensuremath{\mathbb{E}_{#1}\left[#2\right]}}                
\newcommand{\Cov}[2]{\ensuremath{\mathrm{Cov}_{#1}\left[#2\right]}}              
\newcommand{\Ind}[1]{\ensuremath{\mathbf{1}\left[#1\right]}}                     
\newcommand{\Norm}[1]{\ensuremath{\lVert #1 \rVert}}                  
\newcommand{\NormI}[1]{\ensuremath{\lVert #1 \rVert}_1}               
\newcommand{\NormII}[1]{\ensuremath{\lVert #1 \rVert}_2}              
\newcommand{\NormInfty}[1]{\ensuremath{\lVert #1 \rVert_{\infty}}}    
\newcommand{\matrx}[1]{\begin{pmatrix}#1\end{pmatrix}}                           
\newcommand{\InNorm}[1]{{\left\vert\kern-0.2ex\left\vert\kern-0.2ex\left\vert #1 
    \right\vert\kern-0.2ex\right\vert\kern-0.2ex\right\vert}}                    
\newcommand{\InNormII}[1]{{\left\vert\kern-0.2ex\left\vert\kern-0.2ex\left\vert #1 
    \right\vert\kern-0.2ex\right\vert\kern-0.2ex\right\vert}_2}                    
\newcommand{\InNormInfty}[1]{{\left\vert\kern-0.2ex\left\vert\kern-0.2ex\left\vert #1 
    \right\vert\kern-0.2ex\right\vert\kern-0.2ex\right\vert}_{\infty}}           
\newcommand{\Abs}[1]{\ensuremath{\lvert #1 \rvert}}                              
\newcommand{\Prob}[1]{\ensuremath{\mathrm{Pr} \{ #1 \}}}               
\newcommand{\iid}{i.i.d.~}                                                        
\DeclarePairedDelimiterX{\Inner}[2]{\langle}{\rangle}{#1, #2}                    
\newcommand{\Land}{\wedge}                                                       
\newcommand{\defeq}{\overset{\mathrm{def}}{=}}                                   
\DeclareMathOperator*{\union}{\cup}
\DeclareMathOperator*{\intersection}{\cap}
\DeclareMathOperator*{\argmin}{argmin}
\newtheorem{definition}{Definition}
\newtheorem{proposition}{Proposition}
\newtheorem{assumption}{Assumption}
\newtheorem{lemma}{Lemma}
\newtheorem{theorem}{Theorem}
\newtheorem{remark}{Remark}
\newcommand{\mA}{\mat{A}}
\newcommand{\mB}{\mat{B}}
\newcommand{\mI}{\mat{I}}
\newcommand{\mX}{\mat{X}}
\newcommand{\mSig}{\mat{\Sigma}}
\newcommand{\mOmg}{\mat{\Omega}}
\newcommand{\vb}{\vect{b}}
\newcommand{\ve}{\vect{e}}
\newcommand{\vr}{\vect{r}}
\newcommand{\vv}{\vect{v}}
\newcommand{\vw}{\vect{w}}
\newcommand{\vx}{\vect{x}}
\newcommand{\vy}{\vect{y}}
\newcommand{\vz}{\vect{z}}
\newcommand{\vbeta}{\vectsym{\beta}}
\newcommand{\veps}{\vectsym{\varepsilon}}
\newcommand{\vth}{\vectsym{\theta}}
\newcommand{\vomg}{\vectsym{\omega}}
\newcommand{\Set}[1]{\{#1\}}    
\newcommand{\Vs}{\mathsf{V}}  
\newcommand{\Es}{\mathsf{E}}  
\newcommand{\G}{\mathsf{G}}   
\newcommand{\Gh}{\widehat{\mathsf{G}}}   
\newcommand{\Ws}{\mathsf{W}}  
\newcommand{\Wsh}{\widehat{\mathsf{W}}}  
\newcommand{\Ss}{\mathsf{S}}  
\newcommand{\Sh}{\widehat{\mathsf{S}}}  
\newcommand{\Pf}{\mathcal{P}}  
\newcommand{\Par}[2]{\pi_{#2}(#1)}    
\newcommand{\hPar}[2]{\widehat{\pi}_{#2}(#1)}    
\newcommand{\Chi}[2]{\phi_{#2}(#1)}   
\newcommand{\mi}{\mathsf{-i}}         
\newcommand{\corr}{\mathrm{corr}}       
\newcommand{\Sp}{\mathcal{S}}           
\newcommand{\mhOmg}{\widehat{\mOmg}}    
\newcommand{\mtOmg}{\mOmg^*}            
\newcommand{\mhB}{\widehat{\mB}}        
\newcommand{\vhth}{\widehat{\vth}}      
\newcommand{\bomg}{\bar{\omega}}  
\newcommand{\homg}{\widehat{\omega}}    
\newcommand{\vbomg}{\bar{\vectsym{\omega}}}  
\newcommand{\eigmin}{\lambda_{\mathrm{min}}}   
\newcommand{\eigmax}{\lambda_{\mathrm{max}}}   
\newcommand{\smin}{s_{\mathrm{min}}}   
\newcommand{\smax}{s_{\mathrm{max}}}   
\newcommand{\cmax}{C_{\mathrm{max}}}  
\newcommand{\cmin}{C_{\mathrm{min}}}  
\newcommand{\wtlmax}{\widetilde{w}_{\mathrm{max}}}  
\newcommand{\mbrOmg}{\bar{\mOmg}}    
\newcommand{\Ts}{\mathcal{T}}        
\newcommand{\Si}{\Ss_i}              
\newcommand{\dels}{\vectsym{\Delta}_{\Ss}}   
\newcommand{\wtl}{\widetilde{w}}   
\begin{document} 

\title{Learning Identifiable Gaussian Bayesian Networks in Polynomial Time and Sample Complexity}

\author{Asish Ghoshal and Jean Honorio\\
Department of Computer Science\\
Purdue University\\
West Lafayette, IN - 47906\\
\{aghoshal, jhonorio\}@purdue.edu}

\date{}

\maketitle

\begin{abstract}
Learning the directed acyclic graph (DAG) structure of a Bayesian network from observational data is a notoriously difficult problem for which many hardness results are known. In this paper we propose a provably polynomial-time algorithm for learning sparse Gaussian Bayesian networks with equal noise variance --- a class of Bayesian networks for which the DAG structure can be uniquely identified from observational data --- under high-dimensional settings. We show that $\BigO{k^4 \log p}$ number of samples suffices for our method to recover the true DAG structure with high probability, where $p$ is the number of variables and $k$ is the maximum Markov blanket size. We obtain our theoretical guarantees under a condition called Restricted Strong Adjacency Faithfulness, which is strictly weaker than strong faithfulness --- a condition that other methods based on conditional independence testing need for their success. The sample complexity of our method matches the information-theoretic limits in terms of the dependence on $p$. We show that our method out-performs existing state-of-the-art methods for learning Gaussian Bayesian networks in terms of recovering the true DAG structure while being comparable in speed to heuristic methods.
\end{abstract}

\section{Introduction}
\paragraph{Motivation.} The problem of learning the directed acyclic graph (DAG) structure of Bayesian networks (BNs) in general, and Gaussian Bayesian networks (GBNs) --- or equivalently linear Gaussian structural equation models (SEMs) --- in particular, from observational data has a long history in the statistics and machine learning community. This is, in part, motivated by the desire to uncover causal relationships between entities in domains as diverse as finance, genetics, medicine, neuroscience and artificial intelligence, to name a few. Although in general, the DAG structure of a GBN or linear Gaussian SEM cannot be uniquely identified from purely observational data (i.e., multiple structures can encode the same conditional independence relationships present in the observed data set), under certain restrictions on the generative model, the DAG structure can be uniquely determined. Furthermore, the problem of learning the structure of BNs exactly is known to be NP-complete even when the number of parents of a node is at most $q$, for $q > 1$, \cite{chickering1996learning}. It is also known that approximating the log-likelihood to a constant factor, even when the model class is restricted to polytrees with at-most two parents per node, is NP-hard \cite{dasgupta1999learning}. 
\vspace*{-0.1in}
\paragraph{Contribution.} In this paper we develop a polynomial time algorithm for learning a subclass of BNs exactly: sparse GBNs with equal noise variance. Our algorithm involves estimating a $p$-dimensional inverse covariance matrix and solving $2(p - 1)$ at-most-$k$-dimensional ordinary least squares problems, where $p$ is the number of nodes and $k$ is the maximum Markov blanket size of a variable. We show that 
$\BigO{(\nicefrac{k^4}{\alpha^2}) \log (\nicefrac{p}{\delta})}$ samples suffice for our algorithm to recover the true DAG structure and to approximate the parameters to at most $\alpha$ additive error, with probability at least $1 - \delta$, for some $\delta > 0$. The sample complexity of $\BigO{k^4 \log p}$ is close to the information-theoretic limit of $\BigO{k \log p}$ for learning sparse GBNs as obtained by \cite{ghoshal2016information}.
The main assumption under which we obtain our theoretical guarantees is a condition that we refer to as the $\alpha$-\emph{restricted strong adjacency faithfulness} (RSAF). We show that RSAF is a strictly weaker condition than \emph{strong faithfulness}, which methods based on independence testing require for their success. Through simulation experiments we demonstrate that our method recovers the true DAG structure
perfectly. 
\section{Related Work}
In the this section, we first discuss some identifiability results for GBNs known in the literature and then survey relevant algorithms for learning GBNs and Gaussian SEMs.

\cite{peters_causal_2014} proved identifiability of distributions
drawn from a restricted SEM with additive noise, where in the restricted SEM the functions are assumed to be non-linear and thrice continuously differentiable. It is also known that SEMs with linear functions and non-Gaussian noise are identifiable \cite{Shimizu2006}. Indentifiability of the DAG structure for the linear function and Gaussian noise case was proved by \cite{Peters2014} when noise variables are assumed to have equal variance.

Algorithms for learning BNs typically fall into two distinct categories, namely: 
independence test based methods and score based methods.
This dichotomy also extends to the Gaussian case. Score based methods assign a score to a candidate DAG structure based on how well it explains the observed data, and then attempt to
find the highest scoring structure. Popular examples for the Gaussian distribution are the log-likelihood based BIC and AIC scores and the $\ell_0$-penalized log-likelihood score by  \cite{van_de_geer_l0-penalized_2013}. However, given that the number of DAGs and sparse DAGs is exponential in the number of variables \cite{Robinson1977, ghoshal2016information}, searching for the highest scoring DAG in the combinatorial space of all DAGs is prohibitive for all but a few number of variables. \cite{aragam2015concave} propose a score-based method, based on concave penalization of a reparameterized negative log-likelihood function, which can learn a GBN over 1000 variables in an hour. However, the resulting optimization problem is neither convex --- therefore is not guaranteed to find a globally optimal solution --- nor solvable in polynomial time. In light of these shortcomings, approximation algorithms have been proposed for learning BNs which can be used to learn GBNs in conjunction with a suitable score function; notable methods are Greedy Equivalence Search (GES) proposed by \cite{chickering_optimal_2003} and an LP-relaxation based method proposed by \cite{jaakkola_learning_2010}. 

Among independence test based methods for learning GBNs, \cite{kalisch_estimating_2007} extended the PC algorithm to learn the Markov equivalence class of GBNs from observational data. The computational complexity of the PC algorithm is bounded by $\BigO{p^k}$ with high probability, and is only efficient for  learning very sparse DAGs. For the non-linear Gaussian SEM case, \cite{peters_causal_2014} developed
a two-stage algorithm called RESIT, which works by first learning the causal ordering of the variables and then performing regressions to learn the DAG structure. However, as we show in Proposition \ref{prop:resit_prop} (see Appendix \ref{app:appendix_discussion}), RESIT does not work for the linear Gaussian case. Moreover, Peters et al. proved the correctness of RESIT only in the population setting. Lastly, \cite{park2015learning} developed an algorithm, which is similar in spirit to our algorithm, for efficiently learning Poisson Bayesian networks. They exploit a property specific to the Poisson distribution called overdispersion to learn the causal ordering of variables.

Finally, the max-min hill climbing (MMHC) algorithm by \cite{tsamardinos2006max} is a state-of-the-art hybrid algorithm for BNs that combines ideas from constraint-based and score-based learning. While MMHC works well in practice, it is inherently a heuristic algorithm and is not guaranteed to recover the true DAG structure even when it is uniquely identifiable.
\section{Preliminaries}
In this section, we formalize the problem of learning Gaussian Bayesian networks from observational data. First,
we introduce some notations and definitions. We denote the set $\Set{1, \ldots, p}$ by $[p]$. Vectors and matrices are
denoted by lowercase and uppercase bold faced letters respectively. Random variables (including random vectors)
are denoted by italicized uppercase letters. Let $s_r, s_c \subseteq [p]$ be any two non-empty index sets. Then for any
matrix $\mA \in \R^{p \times p}$, we denote the matrix formed by selecting from $\mA$ the rows and columns in $s_r$ and $s_c$ respectively by: $\mA_{s_r, s_c} \in \R^{\Abs{s_r} \times \Abs{s_c}}$. 
With a slight abuse of notation, we will allow the index sets $s_r$ and $s_c$ to be a single index, e.g., $i$, and we will denote the index set of all row (or columns) by $*$. Thus, $\mA_{*,i}$ and $\mA_{i,*}$ denote the $i$-th column and row of $\mA$ respectively. For any vector $\vv \in \R^p$, we will denote its support set by: $\Sp(\vv) = \Set{i \in [p] | \Abs{v_i} > 0 }$.
Vector $\ell_p$-norms are denoted by $\Norm{\cdot}_p$. For matrices, $\Norm{\cdot}_p$ denotes the induced (or operator) $\ell_p$-norm
and $\Abs{\cdot}_p$ denotes the elementwise $\ell_p$-norm, i.e., $\Abs{\mA}_p \defeq (\sum_{i,j} \Abs{A_{i,j}}^p)^{\nicefrac{1}{p}}$.
Finally, we denote the set $[p] \setminus \Set{i}$ by $\mi$.

Let $\G = (\Vs, \Es)$ be a directed
acyclic graph (DAG) where the vertex set $\Vs = [p]$ and $\Es$ is the set of directed edges, where $(i,j) \in \Es$ implies the edge $i \leftarrow j$. 
We denote by $\Par{i}{\G}$ and $\Chi{i}{\G}$ the parent set and the set of children of the $i$-th node respectively,
in the graph $\G$; and drop the subscript $\G$ when the intended graph is clear from context. A vertex $i \in [p]$
is a \emph{terminal vertex} in $\G$ if $\Chi{i}{\G} = \varnothing$. For each $i \in [p]$ we have
a random variable $X_i \in \R$, $X = (X_1, \ldots, X_p)$ is the $p$-dimensional vector of random variables, 
and $\vx = (x_1, \ldots, x_p)$ is a joint assignment to $X$. Without loss of generality, we assume that
$\Exp{}{X_i} = 0,\, \forall i \in [p]$.
Every DAG $\G = (\Vs, \Es)$ defines a set of 
topological orderings $\Ts_{\G}$ over $[p]$ that are compatible with the DAG $\G$, i.e., $\Ts_{\G} = \Set{\tau \in \mathrm{S}_p | \tau(j) < \tau(i) \text{ if } (i,j) \in \Es}$, where $\mathrm{S}_p$ is the set of all possible permutations of $[p]$. 

A Gaussian Bayesian network (GBN) is a tuple $(\G, \Pf(\Ws, \Ss))$, where $\G=(\Vs, \Es)$ is a DAG structure, $\Ws = \Set{w_{i,j} \in \R  \:|\: (i, j) \in \Es \Land \Abs{w_{i,j}} > 0}$ is the set of edge weights, $\Ss = \Set{\sigma_i^2 \in \R_+}_{i=1}^p$ is the set of noise variances, and $\Pf$ is a multivariate Gaussian distribution over $X = (X_1,\ldots,X_p)$ that is \emph{Markov} with respect to the DAG $\G$ and is parameterized by $\Ws$ and $\Ss$. In other words, $\Pf = \mathcal{N}(\vx; \vect{0}, \mSig)$, factorizes as follows:
\begin{gather}
\Pf(\vx; \Ws, \Ss) = \prod_{i=1}^p \Pf_i(x_i ; \vw_i, \vx_{\Par{i}{}}, \sigma_i^2), \label{eq:joint_dist} \\
\Pf_i(x_i; \vw_i, \vx_{\Par{i}{}}, \sigma_i^2) = \mathcal{N}(x_i; \vw_i^T \vx_{\Par{i}{}}, \sigma_i^2 ), \label{eq:cond_dist}
\end{gather}
where $\vw_i \in \R^{\Abs{\Par{i}{}}} \defeq (w_{i,j})_{j \in \Par{i}{}}$ is the weight vector for the $i$-th node,
$\vect{0}$ is a vector of zeros of appropriate dimension (in this case $p$),
$\vx_{\Par{i}{}} = \Set{x_j | j \in \Par{i}{}}$, $\mSig$ is the covariance matrix for $X$, 
and $\Pf_i$ is the conditional distribution of $X_i$ given its parents --- which is also Gaussian.

We will also extensively use an alternative, but equivalent, view of a GBN:
the \emph{linear structural equation model} (SEM). 
Let $\mB = (w_{i,j} \Ind{(i,j) \in \Es})_{(i,j) \in [p] \times [p]}$ be the 
matrix of weights created from the set of edge weights $\Ws$.
A GBN $(\G, \Pf(\Ws, \Ss))$ corresponds to a SEM where each variable $X_i$ can be written as follows:
\begin{align}
X_i = \sum_{j \in \Par{i}{}} B_{i,j} X_{j} + N_i,\, \forall i \in [p] \label{eq:struct_eq}
\end{align}
with $N_i \sim \mathcal{N}(0, \sigma_i^2)$ (for all $i \in [p]$) being independent noise variables and $\Abs{B_{i,j}} > 0$ for all $j \in \Par{i}{}$. The joint distribution of $X$ as given by the SEM corresponds to the distribution $\Pf$ in \eqref{eq:joint_dist} and the graph associated with the SEM, where we have a directed edge $(i,j)$ if $j \in \Par{i}{}$, corresponds to the DAG $\G$. Denoting $N = (N_1, \ldots, N_p)$ as the noise vector, \eqref{eq:struct_eq} can be rewritten in vector form as: $X = \mB X + N$. 

Given a GBN $(\G, \Pf(\Ws, \Ss))$, with $\mB$ being the weight matrix corresponding to $\Ws$, we denote the \emph{effective influence} between two nodes $i, j \in [p]$  
\begin{align}
\wtl_{i,j} \defeq B_{i,j} + B_{j,i} - \mB_{*,i}^T \mB_{*,j} 
\label{eq:effective_influence}
\end{align}
The effective influence $\wtl_{i,j}$ between two nodes $i$ and $j$ is zero if: (a) $i$ and $j$ do not have an edge between them and do not have common children, or (b) $i$ and $j$ have an edge between them but the dot product between the weights to the children ($\mB_{*,i}^T \mB_{*,j}$) exactly
equals the edge weight between $i$ and $j$ ($B_{i,j} + B_{j,i}$). The effective influence determines the Markov blanket of each node, i.e., $\forall i \in [p]$, the Markov blanket  is given as: $\Ss_i = \Set{j \:|\: j \in \mi \Land \wtl_{i,j} \neq 0}$. Furthermore, a node is conditionally independent of all other nodes not in its Markov blanket, i.e., $\Prob{X_i | X_{\mi}} = \Prob{X_i | X_{\Ss_i}}$.
Next, we present a few definitions that will be useful later. 

\begin{definition}[Causal Minimality \cite{Zhang2008}] A distribution $\Pf$ is \emph{causal minimal} with respect to a DAG structure $\G$ if it is not Markov with respect to a proper subgraph of $\G$.
\end{definition}

\begin{definition}[Faithfulness \cite{spirtes2000causation}] Given a GBN $(\G, \Pf)$, $\Pf$ is faithful to the DAG $\G = (\Vs, \Es)$
if for any $i, j \in \Vs$ and any $\Vs' \subseteq \Vs \setminus \Set{i,j}$:
\begin{align*}
i \text{ d-separated from } j | \Vs' \iff \corr(X_i, X_j | X_{\Vs'}) = 0,
\end{align*}
where $\corr(X_i, X_j | X_{\Vs'})$ is the partial correlation between $X_i$ and $X_j$ given $X_{\Vs'}$.
\end{definition}

\begin{definition}[Strong Faithfulness \cite{zhang2002strong}] Given a GBN $(\G, \Pf)$ the multivariate Gaussian distribution
$\Pf$ is $\lambda$-strongly faithful to the DAG $\G$, for some $\lambda \in (0, 1)$, if
\begin{gather*}
\min \Set{\Abs{\corr(X_i, X_j | X_{\Vs'})}, 
	(i \text{ is not d-separated from } j | \Vs') | \\
	 \forall i,j \in [p] \Land \forall \Vs' \subseteq \Vs \setminus \Set{i,j}} \geq \lambda.
\end{gather*}
\end{definition}
Thus, strong faithfulness is a stronger version of the faithfulness assumption that requires that for all triples $(X_i, X_j, X_{\Vs'})$ such that $i \text{ is not d-separated from } j$ given $\Vs'$, the partial correlation $\corr(X_i, X_j | X_{\Vs'})$ is bounded away from $0$. It has been shown that while the set of distributions $\Pf$ that are Markov to a DAG $\G$ but not faithful to it have Lebesgue measure zero, the set of distributions $\Pf$ that are not strongly faithful to $\G$ have nonzero Lebesgue measure, and in fact can be quite large \cite{uhler_geometry_2013}.

The problem of learning a GBN from observational data corresponds to recovering the  DAG structure $\G$ and parameters $\Ws$ from a matrix $\mX \in \R^{n \times p}$ of $n$ \iid samples drawn from $\Pf(\Ws, \Ss)$. In this paper we consider the problem of learning GBNs over $p$ variables where the size of the Markov blanket of a node is at most $k$. This is in general not possible without making
additional assumptions on the GBN $(\G, \Pf(\Ws, \Ss))$ and the distribution $\Pf$ as we describe next.

\subsection{Assumptions}
In this section, we enumerate our technical assumptions.
\begin{assumption}[Causal Minimality] 
\label{ass:causal_minimality}
Let $(\G, \Pf(\Ws, \Ss))$ be a GBN, then $\forall w_{i,j} \in \Ws$, $\Abs{w_{i,j}} > 0$.
\end{assumption}
The above assumption ensures that all edge weights are strictly nonzero, as a result of which we have that each variable $X_i$ is a non-constant function of its parents $X_{\Par{i}{}}$. Given Assumption \ref{ass:causal_minimality}, 
the distribution $\Pf$ is causal minimal with respect to $\G$ \cite{peters_causal_2014} 
and therefore identifiable \cite{Peters2014} under equal noise variances, i.e., 
$\sigma_1 = \ldots = \sigma_p = \sigma$. Throughout the rest of the paper, we will denote
such Bayesian networks by $(\G, \Pf(\Ws, \sigma^2))$. 

\begin{assumption}[Restricted Strong Adjacency Faithfulness]
\label{ass:faithfulness}
Let $(\G, \Pf(\Ws, \sigma^2))$ be a GBN with $\G = (\Vs, \Es)$. 
For every $\tau \in \Ts_{\G}$,
consider the sequence of graphs $\G[m,\tau] = (\Vs[m, \tau], \Es[m, \tau])$ 
indexed by $(m, \tau)$, where $\G[m, \tau]$ is
the induced subgraph of $\G$ over the first $m$ vertices in the topological ordering $\tau$,
i.e., $\Vs[m, \tau] \defeq \Set{i \in [p] \:|\: \tau(i) \leq m}$ and 
$\Es[m, \tau] \defeq \Set{(i,j) \in \Es \:|\: i \in \Vs[m, \tau] \Land j \in \Vs[m, \tau]}$.
The multivariate Gaussian distribution $\Pf$ is restricted
 $\alpha$-strongly adjacency faithful to $\G$, if the following hold: 
\begin{align*}
&(i) ~ \min \Set{\Abs{w_{i,j}} \:|\: (i,j) \in \Es} > 3 \alpha, \\
&(ii) ~ \Abs{\wtl_{i,j}} > \frac{3 \alpha}{\kappa(\alpha)}  ,\: \\
&\qquad	\forall  i \in \Vs[m, \tau] \Land 
		j \in \Ss_i[m, \tau] \Land m \in [p] \Land \tau \in \Ts_{\G},
\end{align*}
where $\alpha > 0$ is a constant, $\wtl_{i,j}$ is the effective influence 
between $i$ and $j$ in the induced subgraph $\G[m, \tau]$ as defined in \eqref{eq:effective_influence}, and
$\Ss_i[m, \tau]$ denotes the Markov blanket of node $i$ in $\G[m, \tau]$. 
The constant $\kappa(\alpha) = 1 - \nicefrac{2}{(1 + 9 \Abs{\Chi{i}{\G[m, \tau]}} \alpha^2)}$ 
if $i$ is a non-terminal vertex in $\G[m, \tau]$, where
$ \Abs{\Chi{i}{\G[m, \tau]}}$ is the number of children of $i$ in $\G[m, \tau]$, 
and $\kappa(\alpha) = 1$ if $i$ is a terminal vertex.
\end{assumption}

Simply stated, the RSAF assumption requires that the absolute value of the edge weights
are at least $3 \alpha$ and the absolute value of the effective influence between two nodes,
 whenever it is non-zero, is at least 
$3 \alpha $ for terminal nodes and $\nicefrac{3\alpha}{\kappa(\alpha)}$ for non-terminal nodes.
Moreover, the above should hold not only for the original DAG, 
but also for each DAG obtained by sequentially removing terminal vertices.
Note that in the regime $\alpha \in (0, \nicefrac{1}{3  \sqrt{\Abs{\Chi{i}{\G[m, \tau]}}}})$, which is the case
when the estimation error $\alpha$ is low, then the condition on $\wtl_{i,j}$ is satisfied trivially.
As we will show later, the Assumption \ref{ass:faithfulness} is equivalent to the following: 
\begin{gather*}
\min\Set{\Abs{\corr(X_i, X_j | X_{\Vs[m,\tau] \setminus \Set{i,j}} )}\; |\; i \in \Vs[m,\tau]\\
 \Land j \in \Ss_i[m,\tau]
 \Land m \in [p] \Land \tau \in \Ts_{\G}}
 \geq \alpha',
\end{gather*}
for some constant $\alpha'$.

The constant $\alpha$ is related to the statistical error when using a finite number of samples and decays as $\BigOm{\sqrt{\nicefrac{\log k}{n}}}$. This implies that
as the number of samples $n \rightarrow \infty$, the ``minimum signal'' requirement for the edge weights and effective influence goes to $0$. In the RSAF assumption we require that not only the edge weights but also the \emph{effective influence} between a node and another node in its Markov blanket, to be bounded away from zero. This is to ensure that the inverse covariance matrix, or precision matrix, correctly recovers the undirected skeleton of $\G$. 
An illustration of a GBN that violates the RSAF assumption is shown in Figure \ref{fig:faithfulness}.
\begin{figure}[htbp]
\begin{center}
\includegraphics[width=0.5\linewidth]{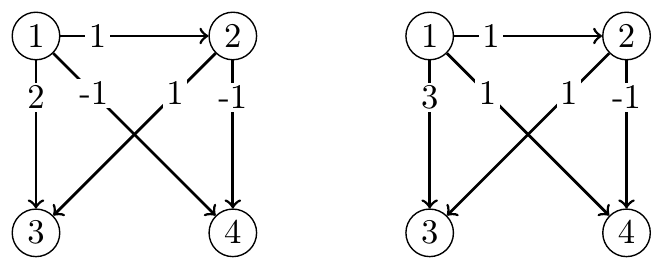}
\end{center}
\vspace*{-0.2in}
\caption{ (Left) GBN, with noise variance set to 1, in which the joint distribution is 
\emph{not} Restricted Strong Adjacency Faithful (RSAF) to the DAG structure because 
in the induced subgraph over nodes $(1, 2, 4)$, we have that the effective influence 
between $1$ and $2$, $\wtl_{1,2} = 0$ (nodes $1$ and $2$ have an edge between them, 
have a common child $4$, and $B_{4,1} B_{4,2} = B_{1,2}$). (Right) GBN in which the joint distribution is RSAF to the DAG structure. Note that the GBN in the left is also not faithful to the DAG structure since $\corr(X_1, X_2 | X_4) = 0$.
\label{fig:faithfulness}}
\end{figure}

Our final assumption requires that the Gaussian distribution $\Pf$ is non-singular.
\begin{assumption}[Non-singularity] 
\label{ass:nonsingular}
Given a GBN $(\G, \Pf(\Ws, \sigma^2))$, the multivariate Gaussian distribution $\Pf$ is non-singular if the covariance matrix is positive definite, i.e., $\eigmin(\mSig) > 0$, and $\eigmax(\mSig) < \infty$, where $\eigmin(.)$ (respectively $\eigmax(.)$) denotes the minimum (respectively maximum) eigenvalue.
\end{assumption}

At this point, it is worthwhile to compare our assumptions with those made by other methods for learning GBNs. Methods based on conditional independence (CI) tests, e.g., the PC algorithm for learning the equivalence class of GBNs developed by \cite{kalisch_estimating_2007}, require strong faithfulness. While strong faithfulness requires that for a node pair $(i,j)$ that are adjacent in the DAG, the partial correlation $\corr(X_i,X_j | X_{\Ss})$ is bounded away from zero for
all sets $\Ss \in \Set{\Ss \subseteq [p] \setminus \Set{i,j}}$, RSAF only requires non-zero partial correlations with respect to a subset of sets in $\Set{\Ss \subseteq [p] \setminus \Set{i,j}}$. Thus, RSAF is strictly weaker than strong faithfulness. Moreover, the number of non-zero partial correlations needed by RSAF
is also strictly a subset of those needed by the faithfulness condition. But, to tolerate statistical errors, we additionally need that the non-zero partial correlations to be bounded away from 0. An example of a GBN which is RSAF but neither faithful, nor strongly faithful, nor adjacency faithful 
(see \cite{uhler_geometry_2013} for a definition) is shown in Figure \ref{fig:rsaf_example}.
\begin{figure}[htbp]
\begin{center}
\includegraphics[width=0.2\linewidth]{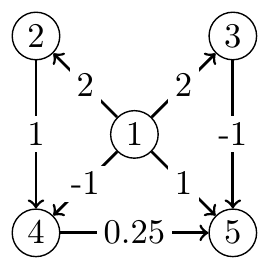}
\end{center}
\vspace*{-0.2in}
\caption{A GBN, with noise variance set to 1, that is RSAF to the DAG structure with $\alpha = 0.25$, but is neither faithful, nor strongly faithful, nor adjacency faithful to the DAG structure.
Consider the pair $(4,5)$. RSAF only requires that $\corr(X_4, X_5 | \varnothing) \neq 0$ which is true in this case ($\corr(X_4, X_5 | \varnothing) = 0.25$). However, we have that $\corr(X_4, X_5 | X_2, X_3)  = 0$ even though $(2,3)$ do not d-separate $4$ and $5$. Other violations of faithfulness include $\corr(X_1, X_4 | \varnothing) = 0$ and $\corr(X_1, X_5 | \varnothing) = 0$. Therefore, a CI test based method will fail to recover the true structure.%
\label{fig:rsaf_example}}
\end{figure}

We conclude this section with one last remark. At first glance, 
it might appear that the assumption of equal variance together with our assumptions implies a simple causal ordering of variables in which the marginal variance of the variables increases monotonically with the causal ordering. However, this is not the case. For instance, in the GBN shown in Figure \ref{fig:rsaf_example} the marginal variance of the causally ordered nodes $(1, 2, 3, 4, 5)$ is $(1,  2,  2,  2,  1.625)$.
\section{Results}
We start off this section by characterizing the covariance and precision matrix for a GBN $(\G, \Pf(\Ws, \sigma^2))$.
Let $\mB$ be the weight matrix corresponding to the edge weights $\Ws$, from \eqref{eq:struct_eq}
it follows that the covariance and precision matrix are, respectively:
\begin{align}
\mSig &= \sigma^2 (\mI - \mB)^{-1} (\mI - \mB)^{-T} 
\label{eq:covariance_mat} \\
\mOmg &= \frac{1}{\sigma^2} (\mI - \mB)^{T} (\mI - \mB), \label{eq:precision_mat}
\end{align}
where $\mI$ is the $p \times p$ identity matrix.
The following technical lemma characterizes the precision matrix $\mOmg$ and the conditional mean of the $i$-th random variable, given all other variables, in terms of the weight matrix $\mB$.
\begin{lemma}
\label{lemma:theta}
Let $(\G, \Pf(\Ws, \sigma^2))$ be a GBN, $\mB$ be the weight matrix corresponding to $\Ws$ and $\mOmg = (\Omega_{i,j})$ be the
inverse covariance matrix over $X$. 
For all $j \neq i$, we have that: $\Omega_{i,j} = (\nicefrac{1}{\sigma^2})(B_{i,j} + B_{j,i} - \mB_{*i}^T\mB_{*j})$,
$\Omega_{i,i} = (\nicefrac{1}{\sigma^2})(1 + \mB_{*i}^T\mB_{*i})$
and  $\Exp{}{X_i | (X_{\mi} = \vx_{\mi})} = \vth_i^T \vx_{\mi}$, where 
\begin{align*}
\theta_{ij} = - \frac{\Omega_{i,j}}{\Omega_{i,i}} = 
	\frac{B_{i,j} + B_{j,i} - \mB_{*i}^T\mB_{*j}}{1 + \mB_{*i}^T\mB_{*i}}.
\end{align*}
\end{lemma}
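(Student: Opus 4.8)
The plan is to read both the precision-matrix entries and the conditional mean directly off the closed form $\mOmg = \frac{1}{\sigma^2}(\mI-\mB)^T(\mI-\mB)$ supplied by \eqref{eq:precision_mat}, so that the entire argument reduces to a single matrix expansion followed by a standard Gaussian-conditioning identity.

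First I would expand $(\mI-\mB)^T(\mI-\mB) = \mI - \mB - \mB^T + \mB^T\mB$ and evaluate it entrywise. The $(i,j)$ entry of $\mB^T\mB$ is $\sum_k B_{k,i}B_{k,j} = \mB_{*,i}^T\mB_{*,j}$, the inner product of the $i$-th and $j$-th columns of $\mB$. For $i \neq j$ the identity contributes nothing, $-\mB$ and $-\mB^T$ contribute the terms in $B_{i,j}$ and $B_{j,i}$, and $\mB^T\mB$ contributes $\mB_{*,i}^T\mB_{*,j}$; dividing by $\sigma^2$ gives the off-diagonal formula, i.e. $\Omega_{i,j}$ is (up to the scalar $1/\sigma^2$) exactly the effective influence $\wtl_{i,j}$ of \eqref{eq:effective_influence}. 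For the diagonal, the identity contributes $1$, the terms from $\mB$ and $\mB^T$ vanish because a DAG has no self-loops (so $B_{i,i}=0$), and $\mB^T\mB$ contributes $\mB_{*,i}^T\mB_{*,i}$, yielding $\Omega_{i,i} = \frac{1}{\sigma^2}(1+\mB_{*,i}^T\mB_{*,i})$.

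For the conditional mean I would invoke the standard fact that, for a zero-mean Gaussian with density proportional to $\exp(-\frac12\vx^T\mOmg\vx)$, conditioning on $X_{\mi}=\vx_{\mi}$ and completing the square in $x_i$ isolates the quadratic $-\frac12\Omega_{i,i}x_i^2 - x_i\sum_{j\neq i}\Omega_{i,j}x_j$ in the log-density (using symmetry of $\mOmg$ to combine the $(i,j)$ and $(j,i)$ cross terms). Its maximizer, which is the conditional mean, is $x_i = -\frac{1}{\Omega_{i,i}}\sum_{j\neq i}\Omega_{i,j}x_j$, so that $\Exp{}{X_i \mid X_{\mi}=\vx_{\mi}} = \vth_i^T\vx_{\mi}$ with $\theta_{ij} = -\Omega_{i,j}/\Omega_{i,i}$. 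Substituting the two entry formulas just derived then produces the stated closed form for $\theta_{ij}$.

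The only place requiring genuine care — and the step I would treat as the main (if minor) obstacle — is the sign bookkeeping. The identity $\theta_{ij}=-\Omega_{i,j}/\Omega_{i,i}$ pins down the sign of the off-diagonal entry, so one must expand $(\mI-\mB)^T(\mI-\mB)$ consistently and track the sign of $B_{i,j}+B_{j,i}-\mB_{*,i}^T\mB_{*,j}$ throughout, ensuring that the numerator of $\theta_{ij}$ reproduces the effective influence $\wtl_{i,j}$ while the denominator matches $\sigma^2\Omega_{i,i}$. Once the signs are handled consistently, everything else is routine arithmetic.
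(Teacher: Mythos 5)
Your proposal takes essentially the same route as the paper's proof: both read $\Omega_{i,j}$ and $\Omega_{i,i}$ off the closed form $\mOmg = \frac{1}{\sigma^2}(\mI-\mB)^T(\mI-\mB)$ (the paper evaluates $(\mI_{*,i}-\mB_{*,i})^T(\mI_{*,j}-\mB_{*,j})$ directly rather than expanding the product, which is the same computation), and both obtain the conditional mean from the Gaussian identity $\vth_i = -\mOmg_{i,\mi}/\Omega_{i,i}$ --- the paper cites it from standard references, while your completing-the-square argument is just the standard derivation of that same fact. One point to settle the sign bookkeeping you flag: the expansion actually yields $\sigma^2\Omega_{i,j} = \mB_{*,i}^T\mB_{*,j} - B_{i,j} - B_{j,i} = -\wtl_{i,j}$, i.e.\ the off-diagonal entry is the \emph{negative} of the effective influence, not the effective influence itself as your second paragraph asserts; this is exactly what the paper's own proof derives (and is opposite in sign to the $\Omega_{i,j}$ displayed in the lemma statement, which is a typo in the paper), and the minus sign in $\theta_{ij} = -\Omega_{i,j}/\Omega_{i,i}$ then reproduces precisely the stated formula $\theta_{ij} = (B_{i,j}+B_{j,i}-\mB_{*,i}^T\mB_{*,j})/(1+\mB_{*,i}^T\mB_{*,i})$.
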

Please see Appendix \ref{app:detailed_proofs} for detailed proofs.
\begin{remark}
Since the elements of the inverse covariance matrix are related to the partial correlations as follows:
$\corr(X_i, X_j | X_{\Vs \setminus \Set{i,j}}) = -\nicefrac{\Omega_{i,j}}{\sqrt{\Omega_{i,i} \Omega_{j,j}}}$.
From Lemma \ref{lemma:theta} we have that, $\wtl_{i,j} \geq c \alpha$ (Assumption \ref{ass:faithfulness}) implies that
$\Abs{\corr(X_i, X_j | X_{\Vs \setminus \Set{i,j}})} \geq \nicefrac{c \alpha}{\sqrt{\Omega_{i,i} \Omega_{j,j}}} > 0$.
\end{remark}
The following lemma describes a key property of terminal vertices.
\begin{lemma}
\label{lemma:terminal_vertex}
Let $(\G, \Pf(\Ws, \sigma^2))$ be a GBN  with $\mOmg$ being the inverse covariance matrix over $X$
and $\vth_i$ being the regression coefficients as given in Lemma \ref{lemma:theta}.
Under Assumption \ref{ass:causal_minimality}, we have that
\begin{align*}
\text{$i$ is a terminal vertex in $\G$} \iff \theta_{ij} = - \sigma^2 \Omega_{i,j},\, \forall j \in \mi.
\end{align*}
\end{lemma}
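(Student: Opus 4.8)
The plan is to route everything through Lemma \ref{lemma:theta}, which gives $\theta_{ij} = -\Omega_{i,j}/\Omega_{i,i}$ and $\Omega_{i,i} = (\nicefrac{1}{\sigma^2})(1 + \mB_{*i}^T\mB_{*i})$. The target identity $\theta_{ij} = -\sigma^2\Omega_{i,j}$ is therefore equivalent, for each $j \in \mi$, to $\Omega_{i,j}\,(\nicefrac{1}{\Omega_{i,i}} - \sigma^2) = 0$. I would first record the combinatorial meaning of the diagonal term: the $i$-th column $\mB_{*i}$ has entry $B_{a,i}$, which is nonzero exactly when $a$ is a child of $i$, so $\mB_{*i} = \vect{0}$ iff $\Chi{i}{\G} = \varnothing$, i.e. iff $i$ is terminal. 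Combined with Lemma \ref{lemma:theta}, this yields the clean reformulation: $i$ is terminal $\iff \mB_{*i}^T\mB_{*i} = 0 \iff \Omega_{i,i} = \nicefrac{1}{\sigma^2}$.

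The forward direction is then immediate: if $i$ is terminal, $\Omega_{i,i} = \nicefrac{1}{\sigma^2}$, so $\theta_{ij} = -\Omega_{i,j}/\Omega_{i,i} = -\sigma^2\Omega_{i,j}$ for all $j \in \mi$. For the converse I would argue the contrapositive. If $i$ is not terminal then $\mB_{*i} \neq \vect{0}$, whence $\Omega_{i,i} > \nicefrac{1}{\sigma^2}$ and $\nicefrac{1}{\Omega_{i,i}} \neq \sigma^2$; so the factor $(\nicefrac{1}{\Omega_{i,i}} - \sigma^2)$ never vanishes, and the identity $\theta_{ij} = -\sigma^2\Omega_{i,j}$ can only fail by exhibiting a single $j \in \mi$ with $\Omega_{i,j} \neq 0$. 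The whole converse thus reduces to showing that a non-terminal node has at least one nonzero off-diagonal precision entry.

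This last reduction is the main obstacle, and it is genuinely not a purely combinatorial statement: since $\Omega_{i,c} = \wtl_{i,c}/\sigma^2$ for a child $c$, the effective influence can vanish through exact cancellation of $B_{c,i}$ against $\mB_{*i}^T\mB_{*c}$ (precisely the faithfulness violation depicted in Figure \ref{fig:faithfulness}), so one cannot simply point to the edge $i \to c$. Instead I would argue by contradiction via second moments. Assuming $\Omega_{i,j} = 0$ for all $j \in \mi$ makes $\mOmg$ block-diagonal after moving index $i$ to the end, so its inverse $\mSig$ shares this structure and $\Sigma_{i,j} = 0$ for all $j \in \mi$; being jointly Gaussian, $X_i$ is then independent of $X_{\mi}$ (equivalently $\Ss_i = \varnothing$). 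Picking any child $c$ of $i$ and taking the covariance of $X_i$ with its structural equation $X_c = B_{c,i}X_i + \sum_{q \in \Par{c}{}\setminus\Set{i}} B_{c,q}X_q + N_c$, every term drops out---$\mathrm{Cov}(X_i, X_q) = 0$ and $\mathrm{Cov}(X_i, X_c) = 0$ by the above, and $\mathrm{Cov}(X_i, N_c) = 0$ because $N_c$ is the independent noise of a descendant of $i$---leaving $B_{c,i}\,\mathrm{Var}(X_i) = 0$. This contradicts $B_{c,i} \neq 0$ (Assumption \ref{ass:causal_minimality}) together with $\mathrm{Var}(X_i) = \Sigma_{i,i} > 0$ (Assumption \ref{ass:nonsingular}). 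Hence some $\Omega_{i,j} \neq 0$, which completes the converse and the lemma.
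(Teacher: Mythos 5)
Your proof is correct, but the converse is established by a genuinely different argument than the paper's. The paper splits the reverse direction into two cases: if $\vth_i \neq \vect{0}$, some $\theta_{ij} = -\sigma^2\Omega_{i,j} \neq 0$ forces $\Omega_{i,i} = \nicefrac{1}{\sigma^2}$ and hence $\mB_{*,i} = \vect{0}$ (essentially your factorization, specialized to one case); if $\vth_i = \vect{0}$, the paper argues \emph{graph-theoretically}: a child $j$ of $i$ with $\Omega_{i,j}=0$ forces $i$ and $j$ to share a common child, and by acyclicity one can descend to a child $k$ of $i$ sharing no common children with $i$, whence $\Omega_{i,k} = \nicefrac{B_{k,i}}{\sigma^2} \neq 0$, a contradiction. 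You instead unify both cases via the identity $\Omega_{i,j}(\nicefrac{1}{\Omega_{i,i}} - \sigma^2) = 0$ and then dispose of the key sub-claim (a non-terminal node has some nonzero off-diagonal precision entry) \emph{probabilistically}: block-diagonal $\mOmg$ implies $X_i \independent X_{\mi}$, and taking covariances against the structural equation of a child $c$ yields $B_{c,i}\,\mathrm{Var}(X_i) = 0$, contradicting causal minimality. Each route has its merits: the paper's stays entirely within the algebra of $\mB$ from Lemma \ref{lemma:theta} and exhibits a concrete child $k$ with $\Omega_{i,k}\neq 0$, though its descent step ("there must be $k \in \mathsf{C}$ with no common children with $i$") needs a little care (e.g., take the child of $i$ that is last in a topological order); your argument sidesteps that delicacy entirely and is arguably more conceptual. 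One small repair: you cite Assumption \ref{ass:nonsingular} for $\mathrm{Var}(X_i) > 0$, but the lemma assumes only Assumption \ref{ass:causal_minimality}; this is harmless because positive definiteness of $\mSig$ is automatic here --- $\mSig = \sigma^2(\mI - \mB)^{-1}(\mI - \mB)^{-T}$ with $\mI - \mB$ invertible (it is triangular with unit diagonal under a topological ordering), so $\mathrm{Var}(X_i) \geq \sigma^2 > 0$ --- and you should justify it that way rather than by an assumption the lemma does not make.
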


Lemma \ref{lemma:terminal_vertex} states that, in the population setting, one can identify the terminal vertex, and therefore the causal ordering, 
just by assuming causal minimality (Assumption \ref{ass:causal_minimality}). However, to identify terminal vertices from a finite number of samples, one needs additional assumptions. We use Lemma \ref{lemma:terminal_vertex} to develop our algorithm for learning GBNs which, at a high level, works as follows. Given data $\mX$ drawn from a GBN, we first estimate the inverse covariance matrix $\mhOmg$.
Then we perform a series of ordinary least squares (OLS) regressions to compute the estimators $\vhth_i \, \forall i \in [p]$. 
We then identify terminal vertices using the property described in Lemma \ref{lemma:terminal_vertex} and remove the corresponding variables (columns) from $\mX$. We repeat the process of identifying and removing terminal vertices and obtain the causal ordering of vertices. Then, we perform a final set of OLS regressions to learn the structure and parameters of the DAG. 

The two main operations performed by our algorithm are: (a) estimating the inverse covariance matrix, and (b) estimating the regression coefficients $\vth_i$. In the next few subsections we discuss these two steps in more detail and obtain theoretical guarantees for our algorithm.
\subsection{Inverse covariance matrix estimation}
The first part of our algorithm requires an estimate $\mhOmg$ of the true 
inverse covariance matrix $\mtOmg$.
Due in part to its role in undirected graphical model selection, 
the problem of inverse covariance matrix estimation has received significant attention over the years. A popular approach for inverse covariance estimation, under high-dimensional settings, is the $\ell_1$-penalized Gaussian maximum likelihood estimate (MLE) studied by 
\cite{yuan2007model}, \cite{banerjee2008model}, and \cite{friedman_sparse_2008}, among others.
The $\ell_1$-penalized Gaussian MLE estimate of the inverse covariance matrix has attractive theoretical guarantees as shown by \cite{ravikumar_high-dimensional_2011}. 
However, the elementwise $\ell_{\infty}$ guarantees for the inverse covariance estimate obtained by \cite{ravikumar_high-dimensional_2011}
require an edge-based mutual incoherence condition that is quite restrictive. Many algorithms have been developed in the recent past for solving the $\ell_1$-penalized Gaussian MLE problem \cite{hsieh_big_2013, hsieh2012divide, rolfs2012iterative, johnson2012high}. While, technically, these algorithms can be  used in the first phase of our algorithm to estimate the inverse covariance matrix, in this paper we use the method called CLIME, developed by \cite{cai_constrained_2011}. The primary motivation behind using CLIME is that the theoretical guarantees obtained by Cai et al. \cite{cai_constrained_2011} does not require the edge-based mutual incoherence condition. Further, CLIME is computationally attractive because it computes $\mhOmg$ columnwise by solving $p$ independent linear programs. Even though the CLIME estimator $\mhOmg$ is not guaranteed to be positive-definite (it is positive-definite with high probability) it is suitable for our purpose since we use $\mhOmg$ only for identifying terminal vertices. Next, we briefly describe the CLIME method for inverse covariance estimation and instantiate the theoretical results of  \cite{cai_constrained_2011} for our purpose.

The CLIME estimator $\mhOmg$ is obtained as follows. First, we compute 
a potentially non-symmetric estimate $\bar{\mOmg} = (\bomg_{i,j})$ by solving the following:
\begin{align}
\bar{\mOmg} = \argmin_{\mOmg \in \R^{p \times p}} \Abs{\mOmg}_1 
	\text{ s.t. } \Abs{\mSig^n \mOmg - \mI}_{\infty} \leq \lambda_n, \label{eq:omegahat}
\end{align}
where $\lambda_n > 0$ is the regularization parameter, $\mSig^n \defeq (\nicefrac{1}{n}) \mX^T \mX$ is the empirical covariance matrix, and $\Abs{\cdot}_{1}$ (respectively $\Abs{\cdot}_{\infty}$) denotes elementwise $\ell_1$ (respectively $\ell_{\infty}$) norm. Finally, the symmetric estimator is obtained by selecting the smaller entry among $\bomg_{i,j}$ and $\bomg_{j,i}$, i.e.,
$\mhOmg = (\homg_{i,j})$, where 
$\homg_{i,j} = \bomg_{i,j} \Ind{\Abs{\bomg_{i,j}} < \Abs{\bomg_{j,i}}} + \bomg_{j,i} \Ind{\Abs{\bomg_{j,i}} \leq \Abs{\bomg_{i,j}}}$.
It is easy to see that \eqref{eq:omegahat} can be decomposed into $p$ linear programs as follows. 
Let $\bar{\mOmg} = (\vbomg_1, \ldots, \vbomg_p)$, then
\begin{align}
\vbomg_i = \argmin_{\vomg \in \R^p} \NormI{\vomg} \text{ s.t. } \Abs{\mSig^n \vomg - \ve_i}_{\infty} \leq \lambda_n, 
\label{eq:omegahat_colwise}
\end{align}
where $\ve_i = (e_{i,j})$ such that $e_{i,j} = 1$ for $j = i$ and $e_{i,j} = 0$ otherwise. The following lemma which follows from the results of \cite{cai_constrained_2011} and \cite{ravikumar_high-dimensional_2011}, bounds the maximum elementwise difference between $\mhOmg$ and the true precision matrix $\mtOmg$.
\begin{lemma}
\label{lemma:inv_cov}
Let $(\G^*, \Pf(\Ws^*, \sigma^2))$ be a GBN satisfying Assumption \ref{ass:causal_minimality},
with $\mSig^*$ and $\mtOmg$ being the ``true'' covariance and inverse covariance matrix over $X$, respectively.
Given a data matrix $\mX \in \R^{n \times p}$ of $n$ \iid samples drawn from $\Pf(\Ws^*, \sigma^2)$, compute $\mhOmg$ by solving \eqref{eq:omegahat}. Then, if the regularization parameter and number of samples satisfy:
\begin{gather*}
\lambda_n \geq \NormI{\mtOmg} \sqrt{(\nicefrac{C_1}{n}) \log (\nicefrac{4p^2}{\delta})}, \\
n \geq (\nicefrac{(16 \sigma^4 \NormI{\mtOmg}^4 C_1)}{\alpha^2}) \log (\nicefrac{(4p^2)}{\delta}),
\end{gather*}
with probability at least $1 - \delta$ we have that $|\mtOmg - \mhOmg|_{\infty} \leq \nicefrac{\alpha}{\sigma^2}$,
where $C_1 = 3200 \bigl(\max_{i} (\mSig^*_{i,i})^2\bigr)$ and $\delta \in (0,1)$. 
\end{lemma}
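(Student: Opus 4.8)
The plan is to derive the bound as an instantiation of two existing results: a \emph{deterministic} approximation guarantee for the CLIME estimator (Cai et al.) and a Gaussian concentration bound for the empirical covariance matrix (Ravikumar et al.). The overall structure is to (i) show that on a high-probability ``good event'' the true precision matrix $\mtOmg$ is feasible for the program \eqref{eq:omegahat}, (ii) invoke the deterministic CLIME bound to control $\Abs{\mbrOmg - \mtOmg}_{\infty}$ on that event, (iii) verify that the symmetrization step does not inflate the error, and (iv) solve for the sample size $n$ that forces the resulting bound to be at most $\nicefrac{\alpha}{\sigma^2}$.

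First I would establish the good event. Since each entry of $\mSig^n$ is an average of products of jointly Gaussian variables, a sub-exponential tail bound (as in Ravikumar et al.) gives, for each pair $(i,j)$,
\[
\Prob{\Abs{(\mSig^n)_{i,j} - \mSig^*_{i,j}} > \epsilon} \le 4 \exp\!\left(-\frac{n \epsilon^2}{C_1}\right),
\]
with $C_1 = 3200 \bigl(\max_i (\mSig^*_{i,i})^2\bigr)$, valid for $\epsilon$ in a suitable range that large $n$ guarantees. A union bound over the $p^2$ entries and equating the right-hand side to $\delta$ yields, with probability at least $1-\delta$, the event $\Abs{\mSig^n - \mSig^*}_{\infty} \le \sqrt{(\nicefrac{C_1}{n}) \log(\nicefrac{4p^2}{\delta})}$. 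On this event I would verify feasibility of $\mtOmg$: using $\mSig^* \mtOmg = \mI$ and the inequality $\Abs{\mA\mB}_{\infty} \le \Abs{\mA}_{\infty} \NormI{\mB}$,
\[
\Abs{\mSig^n \mtOmg - \mI}_{\infty} = \Abs{(\mSig^n - \mSig^*)\mtOmg}_{\infty} \le \Abs{\mSig^n - \mSig^*}_{\infty}\, \NormI{\mtOmg} \le \lambda_n,
\]
where the final step uses the prescribed $\lambda_n = \NormI{\mtOmg}\sqrt{(\nicefrac{C_1}{n})\log(\nicefrac{4p^2}{\delta})}$ together with the good-event bound. Hence $\mtOmg$ lies in the feasible set of \eqref{eq:omegahat}, so the deterministic CLIME argument applies and gives $\Abs{\mbrOmg - \mtOmg}_{\infty} \le 4\, \NormI{\mtOmg}\, \lambda_n$.

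Next I would handle the symmetrization and the final algebra. Since $\mtOmg$ is symmetric, $(\mtOmg)_{i,j} = (\mtOmg)_{j,i}$, and $\homg_{i,j}$ is whichever of $\bomg_{i,j}, \bomg_{j,i}$ has the smaller magnitude; in either case its deviation from the common value $(\mtOmg)_{i,j}$ equals one of $\Abs{\bomg_{i,j} - (\mtOmg)_{i,j}}$ or $\Abs{\bomg_{j,i} - (\mtOmg)_{j,i}}$, both at most $\Abs{\mbrOmg - \mtOmg}_{\infty}$. Therefore $\Abs{\mhOmg - \mtOmg}_{\infty} \le \Abs{\mbrOmg - \mtOmg}_{\infty} \le 4\,\NormI{\mtOmg}^2 \sqrt{(\nicefrac{C_1}{n})\log(\nicefrac{4p^2}{\delta})}$. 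Requiring this quantity to be at most $\nicefrac{\alpha}{\sigma^2}$ and squaring rearranges to precisely $n \ge (\nicefrac{16 \sigma^4 \NormI{\mtOmg}^4 C_1}{\alpha^2}) \log(\nicefrac{4p^2}{\delta})$, matching the stated sample complexity.

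The main obstacle is not any new mathematics but the careful bookkeeping of constants: ensuring that the factor $4$ in the CLIME approximation bound squares to the $16$ in the sample-complexity expression, that the union bound and the tail prefactor $4$ combine to produce the $\log(\nicefrac{4p^2}{\delta})$ term, and that the concentration constant lands exactly on $C_1 = 3200\bigl(\max_i (\mSig^*_{i,i})^2\bigr)$. A secondary care-point is confirming that the symmetrization is error-preserving (rather than error-amplifying), which the argument above settles, and that the range restriction on $\epsilon$ in the sub-exponential tail bound is met --- this is automatic once $n$ exceeds the stated threshold.
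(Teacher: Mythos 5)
Your proposal is correct and follows essentially the same route as the paper's proof: both combine the elementwise concentration bound for $\mSig^n$ from Ravikumar et al.\ (with the same constant $C_1$ and the union bound yielding $\log(\nicefrac{4p^2}{\delta})$) with the CLIME guarantee $\Abs{\mhOmg - \mtOmg}_{\infty} \leq 4 \NormI{\mtOmg} \lambda_n$, and then solve for $n$ to obtain the $16\sigma^4\NormI{\mtOmg}^4$ factor. The only difference is presentational: you unpack the internals of Cai et al.'s Theorem 6 (feasibility of $\mtOmg$ and the error-preserving symmetrization), whereas the paper invokes that theorem as a black box.
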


\begin{remark}
Note that in each column of the true precision matrix $\mtOmg$, at most $k$ entries are non-zero, where $k$ is the maximum Markov blanket size of a node in $\G$. Therefore, the $\ell_1$ induced (or operator) norm $\NormI{\mtOmg} = \BigO{k}$, and the sufficient number of samples required for the estimator $\mhOmg$ to be within $\alpha$ distance from $\mtOmg$, elementwise, with probability at least $1 - \delta$ is $\BigO{(\nicefrac{1}{\alpha^2}) k^4 \log (\nicefrac{p}{\delta})}$.
\end{remark}

\subsection{Estimating regression coefficients}
Given a GBN $(\G, \Pf(\Ws, \sigma^2))$ with the covariance and precision matrix over $X$
being $\mSig$ and $\mOmg$ respectively, 
the conditional distribution of $X_i$ given the variables in its Markov blanket is:
$X_i | (X_{\Si} = \vx) \sim \mathcal{N}((\vth_i)_{\Si}^T \vx,\, \nicefrac{1}{\Omega_{i,i}})$.
Let $\vth^i_{\Si} \defeq (\vth_i)_{\Si}$. This leads to the following generative model for $\mX_{*,i}$:
\begin{align}
\mX_{*,i} = (\mX_{*, \Si}) \vth^i_{\Si} + \veps'_i, \label{eq:gen_model_lsq}
\end{align}
where $\veps'_i \sim \mathcal{N}(0, \nicefrac{1}{\Omega_{i,i}})$ and 
$\mX_{l, \Si} \sim \mathcal{N}(\vect{0}, \mSig_{\Si, \Si})$ for all $l \in [n]$.
Therefore, for all $i \in [p]$, we obtain the estimator $\vhth^i_{\Si}$ of $\vth^i_{\Si}$ by solving the following
ordinary least squares (OLS) problem:
\begin{align}
\vhth^i_{\Si} &= \argmin_{\vbeta \in \R^{\Abs{\Si}}} \frac{1}{2n} \NormII{\mX_{*,i} - (\mX_{*,\Si}) \vbeta}^2 \notag\\
	&= (\mSig^n_{\Ss_i, \Ss_i})^{-1} \mSig^n_{\Ss_i, i} \label{eq:olsq}
\end{align} 
The following lemma bounds the approximation error between the true regression coefficients
and those obtained by solving the OLS problem.

\begin{lemma}
\label{lemma:reg_coeffs}
Let $(\G^*, \Pf(\Ws^*, \sigma^2))$ be a GBN with $\mSig^*$ and $\mtOmg$ being the true covariance and inverse covariance
matrix over $X$. Let $\mX \in \R^{n \times p}$ be the data matrix of $n$ \iid samples drawn from $\Pf(\Ws^*, \sigma^2)$. Let
$\Exp{}{X_i | (X_{\Si} = \vx)} = \vx^T \vth^i_{\Si}$, and let
$\vhth^i_{\Si}$ be the OLS solution obtained by solving \eqref{eq:olsq} for some $i \in [p]$.
Then, under Assumption \ref{ass:nonsingular}, and if the number of samples satisfy:
\begin{align*}
n \geq \frac{c \Abs{\Si}^{\nicefrac{3}{2}} (\NormInfty{\vth^i_{\Si}} + \nicefrac{1}{\Abs{\Si}})}{\eigmin(\mSig^*_{\Si, \Si}) \alpha}
	\log \left(\frac{4 \Abs{\Si}^2}{\delta} \right),
\end{align*}
we have that,
$\NormInfty{\vth^i_{\Si} - \vhth^i_{\Si}} \leq \alpha$ with probability at least $1 - \delta$, for some 
$\delta \in (0, 1)$, with $c$ being an absolute constant.
\end{lemma}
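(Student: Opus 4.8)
The plan is to control the finite-sample deviation of the OLS estimator through the standard ``design-times-noise'' decomposition, and then to convert the resulting bound into an $\ell_\infty$ guarantee by paying the dimension factors that produce the $\Abs{\Si}^{\nicefrac{3}{2}}$ and $(\NormInfty{\vth^i_{\Si}} + \nicefrac{1}{\Abs{\Si}})$ terms. First I would substitute the generative model \eqref{eq:gen_model_lsq} into the closed form \eqref{eq:olsq}. Writing $\mat{E} \defeq \mSig^n_{\Si,\Si} - \mSig^*_{\Si,\Si}$ and $\vr \defeq \mSig^n_{\Si,i} - \mSig^*_{\Si,i}$, and using $\vth^i_{\Si} = (\mSig^*_{\Si,\Si})^{-1}\mSig^*_{\Si,i}$, a purely algebraic calculation gives the exact identity
\begin{align*}
\vhth^i_{\Si} - \vth^i_{\Si} = (\mSig^n_{\Si,\Si})^{-1}\bigl(\vr - \mat{E}\,\vth^i_{\Si}\bigr) = (\mSig^n_{\Si,\Si})^{-1}\tfrac{1}{n}\mX_{*,\Si}^T \veps'_i .
\end{align*}
Crucially, $\veps'_i$ is the Gaussian residual of $X_i$ on $X_{\Si}$ and is therefore independent of $\mX_{*,\Si}$, so this perturbation is mean zero. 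This identity is the workhorse: the error is a well-conditioned inverse applied to a mean-zero perturbation.

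Next I would pass to norms. Bounding $\NormInfty{\cdot} \le \NormII{\cdot}$ and then using $\NormII{(\mSig^n_{\Si,\Si})^{-1}} = \nicefrac{1}{\eigmin(\mSig^n_{\Si,\Si})}$ together with $\NormII{\vr - \mat{E}\vth^i_{\Si}} \le \sqrt{\Abs{\Si}}\,\NormInfty{\vr - \mat{E}\vth^i_{\Si}}$ gives
\begin{align*}
\NormInfty{\vhth^i_{\Si} - \vth^i_{\Si}} \le \frac{\sqrt{\Abs{\Si}}}{\eigmin(\mSig^n_{\Si,\Si})}\Bigl(\Abs{\vr}_\infty + \Abs{\Si}\,\Abs{\mat{E}}_\infty\,\NormInfty{\vth^i_{\Si}}\Bigr),
\end{align*}
where I used $\NormInfty{\mat{E}\vth^i_{\Si}} \le \Abs{\mat{E}}_\infty \NormI{\vth^i_{\Si}} \le \Abs{\Si}\,\Abs{\mat{E}}_\infty\,\NormInfty{\vth^i_{\Si}}$. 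Thus if every entry of $\mat{E}$ and $\vr$ is at most $\epsilon$ in absolute value, the right-hand side is at most $\Abs{\Si}^{\nicefrac{3}{2}}\,\epsilon\,(\NormInfty{\vth^i_{\Si}} + \nicefrac{1}{\Abs{\Si}})/\eigmin(\mSig^n_{\Si,\Si})$, which already exhibits the three factors appearing in the claimed sample complexity.

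It then remains to (i) replace $\eigmin(\mSig^n_{\Si,\Si})$ by $\eigmin(\mSig^*_{\Si,\Si})$ and (ii) drive the elementwise error $\epsilon$ below the required threshold. For (i), the elementwise bound gives $\NormII{\mat{E}} \le \Abs{\Si}\,\Abs{\mat{E}}_\infty \le \Abs{\Si}\epsilon$, so Weyl's inequality yields $\eigmin(\mSig^n_{\Si,\Si}) \ge \eigmin(\mSig^*_{\Si,\Si}) - \Abs{\Si}\epsilon \ge \tfrac12\eigmin(\mSig^*_{\Si,\Si})$ once $\epsilon \le \eigmin(\mSig^*_{\Si,\Si})/(2\Abs{\Si})$; here Assumption \ref{ass:nonsingular} guarantees $\eigmin(\mSig^*_{\Si,\Si}) > 0$. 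For (ii), each entry of $\mat{E}$ and $\vr$ is an empirical mean of products of jointly Gaussian variables, which are sub-exponential; a Bernstein-type tail bound plus a union bound over the at most $\Abs{\Si}^2 + \Abs{\Si}$ entries (hence the $\log(\nicefrac{4\Abs{\Si}^2}{\delta})$) shows that achieving $\Abs{\mat{E}}_\infty, \Abs{\vr}_\infty \le \epsilon$ with probability $1-\delta$ requires $n = \Omega(\epsilon^{-1}\log(\nicefrac{4\Abs{\Si}^2}{\delta}))$ up to the variance scale. Choosing $\epsilon \asymp \alpha\,\eigmin(\mSig^*_{\Si,\Si})/\bigl(\Abs{\Si}^{\nicefrac{3}{2}}(\NormInfty{\vth^i_{\Si}}+\nicefrac{1}{\Abs{\Si}})\bigr)$, so that the bound of the previous paragraph is at most $\alpha$ and (i) also holds, and substituting into $n = \Omega(\epsilon^{-1}\log)$, reproduces the stated sample size.

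The main obstacle is the concentration step for the products of Gaussians: one must establish the sub-exponential tail with the correct variance proxy (controlled by the diagonal of $\mSig^*_{\Si,\Si}$ and by $\nicefrac{1}{\Omega_{i,i}}$, the residual variance identified in Lemma \ref{lemma:theta}), and it is the linear term of the Bernstein bound that drives the $\nicefrac{1}{\alpha}$ (rather than $\nicefrac{1}{\alpha^2}$) dependence. The secondary subtlety is purely bookkeeping: routing the $\ell_\infty$-to-$\ell_2$ and elementwise-to-operator conversions so that exactly $\Abs{\Si}^{\nicefrac{3}{2}}$ and $(\NormInfty{\vth^i_{\Si}}+\nicefrac{1}{\Abs{\Si}})$ survive, and ensuring the eigenvalue-control event and the accuracy event hold simultaneously under a single $n$.
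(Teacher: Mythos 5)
Your proposal follows the paper's proof almost step for step: the paper likewise reduces everything to elementwise concentration of the sample covariance (Lemma 1 of Ravikumar et al., with a union bound over the $\BigO{\Abs{\Si}^2}$ entries producing the $\log(\nicefrac{4\Abs{\Si}^2}{\delta})$ factor), uses the normal equations $\mSig^n_{\Si,\Si}\vhth^i_{\Si} = \mSig^n_{\Si,i}$ and $\mSig^*_{\Si,\Si}\vth^i_{\Si} = \mSig^*_{\Si,i}$ in a reverse-triangle-inequality rearrangement that is algebraically identical to your identity $\vhth^i_{\Si}-\vth^i_{\Si} = (\mSig^n_{\Si,\Si})^{-1}(\vr - \mat{E}\vth^i_{\Si})$, and performs the same norm bookkeeping to arrive at $\NormInfty{\vhth^i_{\Si}-\vth^i_{\Si}} \leq \Abs{\Si}^{\nicefrac{3}{2}}\epsilon\,(\NormInfty{\vth^i_{\Si}}+\nicefrac{1}{\Abs{\Si}})/\eigmin(\mSig^n_{\Si,\Si})$. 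The only genuine methodological difference is the control of $\eigmin(\mSig^n_{\Si,\Si})$: the paper invokes sub-Gaussian singular-value concentration (Theorem 5.39 of Vershynin) to get $\eigmin(\mSig^n_{\Si,\Si}) \geq \eigmin(\mSig^*_{\Si,\Si})/4$ with probability $1-2\exp(-cn)$, whereas you derive it from Weyl's inequality and the same elementwise event; both work, yours being more elementary at the cost of the extra constraint $\epsilon \leq \eigmin(\mSig^*_{\Si,\Si})/(2\Abs{\Si})$, which you correctly note must be made compatible with the accuracy-driven choice of $\epsilon$.

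The gap is in the concentration step, exactly where you locate the crux. The claim that ``the linear term of the Bernstein bound drives the $\nicefrac{1}{\alpha}$ dependence'' is not correct in the relevant regime: the sub-exponential tail is $\exp\bigl(-cn\min(\epsilon^2/K^2,\, \epsilon/K)\bigr)$, and the linear branch is active only when $\epsilon \gtrsim K$, where $K \asymp \max_j \mSig^*_{j,j}$ is the sub-exponential scale of products of the Gaussian coordinates. The accuracy you must request, $\epsilon \asymp \alpha\,\eigmin(\mSig^*_{\Si,\Si})/\bigl(\Abs{\Si}^{\nicefrac{3}{2}}(\NormInfty{\vth^i_{\Si}}+\nicefrac{1}{\Abs{\Si}})\bigr)$, tends to $0$ as $\alpha \to 0$ or $\Abs{\Si}$ grows, so the quadratic branch governs and a correct application yields $n \gtrsim \epsilon^{-2}\log(\nicefrac{\Abs{\Si}^2}{\delta})$, i.e., a $\nicefrac{1}{\alpha^2}$ requirement rather than the stated $\nicefrac{1}{\alpha}$. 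You should know that the paper's own proof has the same defect: it cites the tail $\exp(-n\varepsilon'^2/C_1)$ but then reports a success probability whose exponent is linear in $\alpha$, which amounts to absorbing the non-constant factor $\varepsilon'$ into the ``absolute constant'' $c$. So your proposal faithfully reconstructs the paper's argument, including its problematic step; but neither routing legitimately produces the advertised linear dependence on $\nicefrac{1}{\alpha}$. Indeed, your own exact identity shows why no argument can: conditional on the design, $\vhth^i_{\Si}-\vth^i_{\Si}$ is Gaussian with covariance $\nicefrac{1}{(n\Omega_{i,i})}\,(\mSig^n_{\Si,\Si})^{-1}$, so its coordinates fluctuate at scale $n^{-\nicefrac{1}{2}}$, and for small $\alpha$ a sample size scaling as $\nicefrac{1}{\alpha}$ leaves the error above $\alpha$ with constant probability; the $\nicefrac{1}{\alpha^2}$ rate is the correct one.
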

\subsection{Our algorithm}
Algorithm \ref{alg:main} presents our algorithm for learning GBNs. Throughout the algorithm we use as indices the true label of a node.
We first estimate the inverse covariance matrix, $\mhOmg$, in line \ref{line:omega_hat}. In line \ref{line:markov_blanket}
we estimate the Markov blanket of each node.
Then, we estimate $\widehat{\theta}_{i,j}$ for all $i$ and ${j \in \Sh_i}$,
and compute the maximum per-node ratios 
$r_i = \Abs{-\nicefrac{\widehat{\Omega}_{i,j}}{\widehat{\theta}_{i,j}}}$ in lines \ref{line:ri_start} -- \ref{line:ri_end}. We then identify as terminal vertex the node for which $r_i$ is minimum and remove it from the collection of variables
(lines \ref{line:term_vertex_1} and \ref{line:term_vertex_2}).
Each time a variable is removed, we perform a rank-1 update of the precision matrix 
(line \ref{line:precision_update}) and also update the regression coefficients of the nodes in its Markov blanket (lines \ref{line:update_start} -- \ref{line:update_end}). We repeat this process of identifying and removing  terminal vertices until the causal order has been completely determined.
Finally, we compute the DAG structure and parameters
by regressing each variable against variables that are in its Markov blanket which also precede it in the causal order (lines \ref{line:learn_start} -- \ref{line:learn_end}).
\begin{algorithm}[!ht]
\caption{Gaussian Bayesian network structure learning algorithm.}
\label{alg:main}
\begin{algorithmic}[1]
\Require Data matrix $\mX \in \R^{n \times p}$.
\Ensure $(\Gh, \Wsh)$.
\State $\mhB \gets \vect{0} \in \R^{p \times p}$.
\State $\vz \gets \varnothing$. \Comment{$\vz$ stores the causal order.}
\State $\vr \gets \varnothing$. 
\State $\Vs \gets [p]$. \Comment{Remaining vertices.}
\State $\mSig^n \gets (\nicefrac{1}{n}) \mX^T \mX$.
\State Compute $\mhOmg$ using the CLIME estimator. \label{line:omega_hat}
\State $\forall i \in [p]$, compute $\Sh_i = 
	\Set{j \in \mi \:|\: \Abs{\widehat{\Omega}_{i,j}} > 0}$. \label{line:markov_blanket}
\For{$i \in 1, \ldots, p$} \label{line:ri_start}
	\State Compute $\vhth^i_{\Sh_i} \defeq (\vhth_i)_{\Sh_i} = 
		(\mSig^n_{\Sh_i, \Sh_i})^{-1} \mSig^n_{\Sh_i, i} $.
	\State $r_i \gets \max\Set{\Abs{-\nicefrac{\widehat{\Omega}_{i,j}}{\widehat{\theta}_{i,j}}} \: | \: j \in \Sh_i} $.
\EndFor  \label{line:ri_end}
\For{$t \in 1 \ldots p - 1$}
	\State $i \gets \argmin(\vr)$. \Comment{$i$ is a terminal vertex}. \label{line:term_vertex_1}
	\State Append $i$ to $\vz$; $\Vs \gets \Vs \setminus \Set{i}$; $r_i \gets +\infty$. \label{line:term_vertex_2}
	\State $\mhOmg \gets \mhOmg_{\mi, \mi} - 
		(\nicefrac{1}{\widehat{\Omega}_{i,i}}) (\mhOmg_{\mi,i})(\mhOmg_{i, \mi})$
		\label{line:precision_update}.
	\For{$j \in \Sh_i$} \label{line:update_start}
		\State $\Sh_j \gets \Set{l \neq j \:|\: \Abs{\widehat{\Omega}_{j,l}} > 0}$.
		\State Compute $\vhth^j_{\Sh_j} \defeq (\vhth_j)_{\Sh_j}
			= (\mSig^n_{\Sh_j, \Sh_j})^{-1} \mSig^n_{\Sh_j, j}$.
		\State $r_j \gets \max\Set{\Abs{-\nicefrac{\widehat{\Omega}_{j,l}}{\widehat{\theta}_{j,l}}} \: | \: l \in \Sh_j}$.
	\EndFor \label{line:update_end}
\EndFor
\State Append the remaining vertex in $\Vs$ to $\vz$.
\For{$i \in 2, \ldots, p$} \label{line:learn_start}
	\State $\Sh_{z_i} \gets \Set{z_j | j \in [i - 1]} \intersection 
		\Set{j \in [p] \: |\:  j \neq z_i \Land \Abs{\widehat{\Omega}_{z_i,j}} > 0}$.
	\State Compute $\vhth = (\mSig^n_{\Sh_{z_i}, \Sh_{z_i}})^{-1} \mSig^n_{\Sh_{z_i}, z_i}$ \label{line:theta}.
	\State $\hPar{z_i}{} \gets \Sp(\vhth)$. 	\label{line:parent_set}
	\State $\mhB_{z_i, \hPar{z_i}{}} \gets \vhth_{\hPar{z_i}{}}$.
\EndFor
\State $\widehat{\Es} \gets \Set{(i, j) \:|\: \widehat{B}_{i,j} \neq 0}$, 
	$\Wsh \gets \Set{\widehat{B}_{i,j} | (i,j) \in \widehat{\Es}}$, and 
	$\Gh \gets ([p], \widehat{\Es})$.  \label{line:learn_end}
\end{algorithmic}
\end{algorithm}

In order to obtain our main result for learning GBNs we first derive the following technical lemma which states that if
the data comes from a GBN that satisfies Assumptions \ref{ass:causal_minimality} -- \ref{ass:nonsingular},
then removing a terminal vertex results in a GBN that still satisfies Assumptions \ref{ass:causal_minimality} -- \ref{ass:nonsingular}.
\begin{lemma}
\label{lemma:assumptions}
Let $(\G, \Pf(\Ws, \sigma^2))$ be a GBN satisfying Assumptions \ref{ass:causal_minimality} -- \ref{ass:nonsingular} and let $\mOmg$ be the precision matrix. Let $\mX \in \R^{n \times p}$ be a data matrix of $n$ \iid samples drawn from $\Pf(\Ws, \sigma^2)$,
and let $i$ be a terminal vertex in $\G$. 
Denote by $\G'=(\Vs', \Es')$ and $\Ws' = \Set{w_{i,j} \in \Ws \:|\: (i,j) \in \Es'}$ the graph and set of edge weights, 
respectively, obtained by removing the node $i$ from $\G$. Then,
$\mX_{j,\mi} \sim \Pf(\Ws', \sigma^2) \; \forall j \in [n]$, and the GBN $(\G', \Pf(\Ws', \sigma^2))$ satisfies
Assumptions \ref{ass:causal_minimality} -- \ref{ass:nonsingular}. Further, the inverse covariance matrix $\mOmg'$ and the covariance matrix $\mSig'$ for the GBN $(\G', \Pf(\Ws', \sigma^2))$ satisfy (respectively):
$\mOmg' = \mOmg - (\nicefrac{1}{\Omega_{i,i}}) \mOmg_{*,i} \mOmg_{i,*}$
and $\mSig' = \mSig_{\mi, \mi}$.
\end{lemma}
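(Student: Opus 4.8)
The plan is to handle the three assertions separately, in order of increasing difficulty: the marginal-distribution claim and the identity $\mSig' = \mSig_{\mi,\mi}$ follow directly from the graphical meaning of a terminal vertex; the precision-matrix downdate is standard Gaussian marginalization; and the inheritance of Assumptions \ref{ass:causal_minimality}--\ref{ass:nonsingular} is where the real work lies.

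First I would establish $\mX_{j,\mi} \sim \Pf(\Ws', \sigma^2)$. The crucial observation is that a terminal vertex $i$ has no children, so $i \notin \Par{j}{}$ for every $j \neq i$; hence removing $i$ leaves the structural equation \eqref{eq:struct_eq} of every other node untouched, and $\Par{j}{\G'} = \Par{j}{\G}$ for all $j \neq i$. Factoring the joint density \eqref{eq:joint_dist} as $\Pf_i(x_i \mid \vx_{\Par{i}{}}) \prod_{j \neq i} \Pf_j(x_j \mid \vx_{\Par{j}{}})$ and integrating out $x_i$ (the conditional $\Pf_i$ integrates to one) yields exactly the factorization of $\Pf(\Ws', \sigma^2)$ over $\G'$. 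Since marginalizing a multivariate Gaussian simply restricts the covariance to the surviving indices, this simultaneously delivers $\mSig' = \mSig_{\mi,\mi}$. For the precision identity, I would partition $\mOmg$ by separating out index $i$; the precision of the marginal $X_{\mi}$ is the Schur complement $\mOmg_{\mi,\mi} - \Omega_{i,i}^{-1}\mOmg_{\mi,i}\mOmg_{i,\mi}$ by block inversion. Noting that the rank-one downdate $\mOmg - \Omega_{i,i}^{-1}\mOmg_{*,i}\mOmg_{i,*}$ annihilates the $i$-th row and column (its $(i,j)$ entry is $\Omega_{i,j} - \Omega_{i,i}^{-1}\Omega_{i,i}\Omega_{i,j} = 0$) reconciles the stated $p \times p$ formula with the $(p-1)\times(p-1)$ update in Algorithm \ref{alg:main}.

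The main obstacle is showing that $(\G', \Pf(\Ws', \sigma^2))$ still satisfies Assumptions \ref{ass:causal_minimality}--\ref{ass:nonsingular}. Assumption \ref{ass:causal_minimality} is immediate, since $\Ws' \subseteq \Ws$ keeps all surviving weights nonzero. Assumption \ref{ass:nonsingular} follows from Cauchy interlacing: $\mSig' = \mSig_{\mi,\mi}$ is a principal submatrix of $\mSig$, so $0 < \eigmin(\mSig) \leq \eigmin(\mSig')$ and $\eigmax(\mSig') \leq \eigmax(\mSig) < \infty$. The delicate part is Assumption \ref{ass:faithfulness} (RSAF), whose conditions quantify over all induced subgraphs $\G'[m,\tau']$ with $m \in [p-1]$ and $\tau' \in \Ts_{\G'}$. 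Here I would exploit that, because $i$ is terminal, every $\tau' \in \Ts_{\G'}$ extends to a valid $\tau \in \Ts_{\G}$ by appending $i$ in the last position, $\tau(i) = p$ (all edge constraints into $i$ are satisfied since every other node precedes it, and $i$ has no outgoing edges). For such $\tau$ and any $m \leq p-1$, the vertex $i$ lies outside the first $m$ positions, so $\Vs[m,\tau] = \Vs'[m,\tau']$ and $\G[m,\tau] = \G'[m,\tau']$; consequently the Markov blankets $\Ss_i[m,\tau]$, the effective influences $\wtl_{i,j}$, the children counts, and hence $\kappa(\alpha)$ all coincide between the two families.

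I would then conclude the RSAF inheritance in two pieces. Part $(i)$ transfers because $\min\Set{\Abs{w_{i,j}} : (i,j) \in \Es'} \geq \min\Set{\Abs{w_{i,j}} : (i,j) \in \Es} > 3\alpha$, a minimum over the subset $\Es' \subseteq \Es$. Part $(ii)$ transfers because, by the construction above, the family of induced subgraphs of $\G'$ is precisely the subfamily of induced subgraphs of $\G$ arising from topological orderings that place $i$ last, and the RSAF hypothesis on $\G$ already bounds $\Abs{\wtl_{i,j}}$ over all such subgraphs. I expect the bookkeeping that matches $\G'[m,\tau']$ with $\G[m,\tau]$---together with the verification that every quantity appearing in Assumption \ref{ass:faithfulness} (the blanket, the effective influence, the children count, and $\kappa(\alpha)$) depends only on the induced subgraph and not on the ambient graph---to be the step requiring the most care.
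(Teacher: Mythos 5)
Your proof is correct and follows essentially the same route as the paper's: removing a terminal vertex leaves the structural equations of the remaining variables untouched, the identities $\mSig' = \mSig_{\mi,\mi}$ and the rank-one precision downdate are standard Gaussian marginalization, causal minimality is immediate from $\Ws' \subseteq \Ws$, and non-singularity follows from a principal-submatrix eigenvalue bound. The only difference is that where the paper dismisses RSAF inheritance as ``self evident,'' you supply the actual argument --- extending each $\tau' \in \Ts_{\G'}$ to a $\tau \in \Ts_{\G}$ by placing the terminal vertex $i$ last, so that the induced-subgraph families (and hence the effective influences, Markov blankets, and $\kappa(\alpha)$) of $\G'$ form a subfamily of those of $\G$ --- which is precisely the reasoning the paper leaves implicit.
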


\begin{theorem}
\label{thm:main}
Let $\widehat{\G} = ([p], \widehat{\Es})$ and $\widehat{\Ws}$ be the DAG and edge weights, respectively, returned by Algorithm \ref{alg:main}.
Under the assumption that the data matrix $\mX$ was drawn from a GBN $(\G^*, \Pf(\Ws^*, \sigma^2))$ with $\G^* = ([p], \Es^*)$,
$\mSig^*$ and $\mtOmg$ being the ``true'' covariance and inverse
covariance matrix respectively, and
satisfying Assumptions \ref{ass:causal_minimality} -- \ref{ass:nonsingular}; if the regularization
parameter is set according to Lemma \ref{lemma:inv_cov}, and if the number of samples satisfies the condition:
\begin{align*}
&n \geq c \left( \frac{\sigma^4 \NormI{\mtOmg}^4 \cmax}{\alpha^2} + \frac{k^{(\nicefrac{3}{2})}(\wtlmax + \nicefrac{1}{k})}{\cmin \alpha} \right) 
	 \log \left(\frac{24 p^2 (p - 1)}{\delta} \right),
\end{align*} 
where $c$ is an absolute constant, $\wtlmax \defeq \max \Set{\Abs{\wtl_{i,j}} \:|\: i \in \Vs[m, \tau] \Land 
		j \in \Ss_i[m, \tau] \Land m \in [p] \Land \tau \in \Ts_{\G} }$
with $\wtl_{i,j}$ being the effective influence between $i$ and $j$ \eqref{eq:effective_influence}, $\cmax = \max_{i \in p} (\mSig^*_{i,i})^2$,
and $\cmin = \min_{i \in [p]} \eigmin(\mSig^*_{\Ss_i, \Ss_i})$, then, $\widehat{\Es} = \Es^*$ and $\forall (i,j) \in \widehat{\Es},\: \Abs{\widehat{w}_{i,j} - w^*_{i,j}} \leq \alpha$ with probability at least $1 - \delta$ for some $\delta \in (0, 1)$ and $\alpha > 0$.
\end{theorem}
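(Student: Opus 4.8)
The plan is to reduce the theorem to a deterministic statement about the robustness of Algorithm~\ref{alg:main}'s population-level logic under $\alpha$-scale perturbations, and then to control those perturbations probabilistically. First I would fix a single ``good event'' on which every estimate the algorithm produces is accurate: by Lemma~\ref{lemma:inv_cov} the CLIME estimate obeys $\Abs{\mhOmg - \mtOmg}_\infty \le \alpha/\sigma^2$, and by Lemma~\ref{lemma:reg_coeffs} each OLS solution obeys $\NormInfty{\vhth^i_{\Si} - \vth^i_{\Si}} \le \alpha$. Since the algorithm solves one precision-matrix program and recomputes regression coefficients across its $p-1$ elimination rounds and its final parent-recovery loop, a union bound over these $\BigO{p}$ rounds --- each inheriting the $\log(4p^2/\delta)$-type failure probabilities of Lemmas~\ref{lemma:inv_cov} and~\ref{lemma:reg_coeffs} --- produces exactly the stated two-term sample complexity and the $\log\!\bigl(24 p^2 (p-1)/\delta\bigr)$ factor. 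I would condition on this event, so that everything remaining is deterministic.

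Next I would establish the two structural primitives. For Markov-blanket recovery, Lemma~\ref{lemma:theta} identifies the support of column $i$ of $\mtOmg$ with $\Ss_i$, and Assumption~\ref{ass:faithfulness}(ii) forces each true off-diagonal $\Abs{\Omega_{i,j}} = \Abs{\wtl_{i,j}}/\sigma^2$ with $j \in \Ss_i$ to exceed the $\alpha/\sigma^2$ estimation error, so thresholding $\mhOmg$ separates the true nonzeros from the zeros and $\Sh_i = \Ss_i$. The heart of the proof is terminal-vertex identification. Lemma~\ref{lemma:terminal_vertex} gives the exact population identity $-\Omega_{i,j}/\theta_{i,j} = \Omega_{i,i}$ for every $j \in \Ss_i$, and $\Omega_{i,i} = (1 + \mB_{*i}^T\mB_{*i})/\sigma^2$ equals $1/\sigma^2$ precisely at terminal vertices and is strictly larger otherwise; hence in the population $r_i = \Omega_{i,i}$ and $\argmin_i r_i$ is terminal. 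I would then propagate the $\alpha$-scale errors through the ratio $-\widehat\Omega_{i,j}/\widehat\theta_{i,j}$: for a terminal $i$ every $j$ is a parent with $\Abs{\theta_{i,j}} = \Abs{w_{i,j}} > 3\alpha$ (Assumption~\ref{ass:faithfulness}(i)), so its estimated ratio stays close to $1/\sigma^2$, whereas for a non-terminal $i$ Assumption~\ref{ass:faithfulness}(ii) lower-bounds $\Abs{\wtl_{i,j}}$, and the denominator $1 + \mB_{*i}^T\mB_{*i}$ is controlled by the number of children, so at least one reliable $j$ forces $r_i$ strictly above $1/\sigma^2$. This is the step I expect to be the main obstacle: showing that the perturbed ratios still order terminal below non-terminal vertices is exactly what the $\kappa(\alpha)$ factor in Assumption~\ref{ass:faithfulness} is calibrated to guarantee.

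To turn single-step identification into a full causal order I would induct with Lemma~\ref{lemma:assumptions}. After deleting the identified terminal vertex $i$, that lemma certifies that the marginal distribution over the remaining variables is again a GBN satisfying Assumptions~\ref{ass:causal_minimality}--\ref{ass:nonsingular}, that its covariance is the submatrix $\mSig_{\mi,\mi}$ (so the regressions computed from $\mSig^n$ restricted to the current blanket remain the correct population objects), and that its precision matrix is the rank-one Schur update applied on line~\ref{line:precision_update}. Crucially, Assumption~\ref{ass:faithfulness} is stated over every induced subgraph $\G[m,\tau]$, so the RSAF hypotheses needed at each round are available by design; and because the pivot $\Omega_{i,i} \ge 1/\sigma^2$ is bounded away from zero, the $\ell_\infty$ accuracy of the updated precision estimate degrades only by controllable constant factors that I would fold into the union bound. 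Applying the single-step argument along the elimination then yields the exact causal ordering $\vz$.

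Finally, given the correct ordering, I would analyze the parent-recovery loop (lines~\ref{line:learn_start}--\ref{line:learn_end}). Writing $X_{z_i} = \sum_{j \in \Par{z_i}{}} B_{z_i,j} X_j + N_{z_i}$ and noting that $N_{z_i}$ is independent of all predecessors of $z_i$, the residual of regressing $X_{z_i}$ on its blanket members that precede it is exactly $N_{z_i}$; hence the population regression coefficients equal $w_{z_i,j}$ on parents and vanish on non-parents. Lemma~\ref{lemma:reg_coeffs} then gives $\Abs{\widehat w_{z_i,j} - w^*_{z_i,j}} \le \alpha$, while Assumption~\ref{ass:faithfulness}(i) (that $\Abs{w_{z_i,j}} > 3\alpha$) separates true edges from the $\alpha$-level noise on absent ones, so the recovered support satisfies $\Sp(\vhth) = \Par{z_i}{}$. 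Collecting these facts over all $i$ gives $\widehat\Es = \Es^*$ together with the claimed $\alpha$-accuracy of the weights on the good event, which completes the proof.
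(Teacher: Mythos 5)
Your proposal is correct and follows essentially the same route as the paper's proof: the same good event built from Lemmas \ref{lemma:inv_cov} and \ref{lemma:reg_coeffs} with a union bound over the $\BigO{p}$ rounds (yielding the $\log(24p^2(p-1)/\delta)$ factor), terminal-vertex identification by comparing perturbed ratios $\Abs{\widehat{\Omega}_{i,j}/\widehat{\theta}_{i,j}}$ --- where the paper carries out explicitly the computation you defer to the ``calibration'' of $\kappa(\alpha)$, namely that terminal ratios are at most $2/\sigma^2$ while RSAF(ii) forces non-terminal ratios strictly above that bound --- followed by induction through Lemma \ref{lemma:assumptions} and final parent recovery via OLS plus the $3\alpha$ edge-weight separation. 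The only (cosmetic) difference is in the last step, which you justify by noting the SEM residual equals the noise term, whereas the paper phrases the same fact through the marginal precision matrix over $X_{\Set{z_i} \union \Sh_{z_i}}$ and the local Markov property.
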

The CLIME estimator of the precision matrix can be computed in polynomial time and the OLS steps take $\BigO{p k^3}$ time. 
Therefore our algorithm is polynomial time. For more details see Appendix \ref{app:appendix_discussion}.

\section{Experiments}
In this section we study the empirical performance of our method on synthetic and real-world data.
In the first set of experiments we seek to empirically characterize the number of samples needed by our method for learning the DAG structure of a GBN exactly. We sample a random DAG structure $\G^*$ over $p$ nodes by first generating an Erd\H{o}s-R\'{e}nyi undirected graph where each edge is sampled independently with probability $q$. Then, we randomly select a permutation of the vertex set $[p]$ and direct the edges as $i \rightarrow j$ if the node $i$ appears before node $j$ in the permutation. We then generate a GBN $(\G^*, \Pf(\Ws^*, \sigma^2))$ by setting the noise variance $\sigma^2 = 0.8$ for all nodes and randomly setting the edge weights to $w^*_{i,j} = \pm \nicefrac{1}{2}$ with probability $\nicefrac{1}{2}$. 
To avoid numerical issues we discarded GBNs where the minimum eigenvalue of 
the inverse covariance matrix was less than $0.05$. Further, we verified that across thousands of randomly sampled GBNs, RSAF was satisfied with $\alpha$ varying between between $0.25$ to $0.5$. 
After sampling a GBN, we sample a data set of $n$ samples and learn a GBN $(\Gh, \Pf(\Wsh, \widehat{\sigma}^2))$. Finally, we estimate the probability $\Prob{\G^* = \Gh}$ by computing the fraction of times the learned DAG structure $\Gh$ matched the true DAG structure $\G^*$ exactly, across 30 randomly sampled GBNs. We repeated the experiment for $p \in \Set{50, 100, 150, 200}$ with $q \in \Set{0.01, 0.005, 0.0033, 0.0025}$ (correspondingly). The number of samples was set to $C k^2 \log p$, where $C$ was the control parameter and was chosen to be in $\Set{1, 20, 40, 80, 100, 120}$, and $k$ was the maximum size of the Markov blanket across all nodes in the sampled DAG $\G^*$. The mean and maximum value of $k$ (across 30 runs) for the different choices of $p$ was $\Set{3.2, 3.68, 4.12, 4.39}$ and $\Set{7, 10, 7, 9}$ respectively. The regularization parameter was set to $\lambda_n = 0.5 k \sqrt{\nicefrac{(\log p)}{n}}$, as prescribed by Lemma $\ref{lemma:inv_cov}$. Figure \ref{fig:struct_recovery} shows the results of the structure and parameter recovery experiments. We can see that the $\log p$ scaling as prescribed by Theorem \ref{thm:main} holds in practice. 
\begin{figure}[htbp]
\begin{center}
\includegraphics[width=0.5\linewidth]{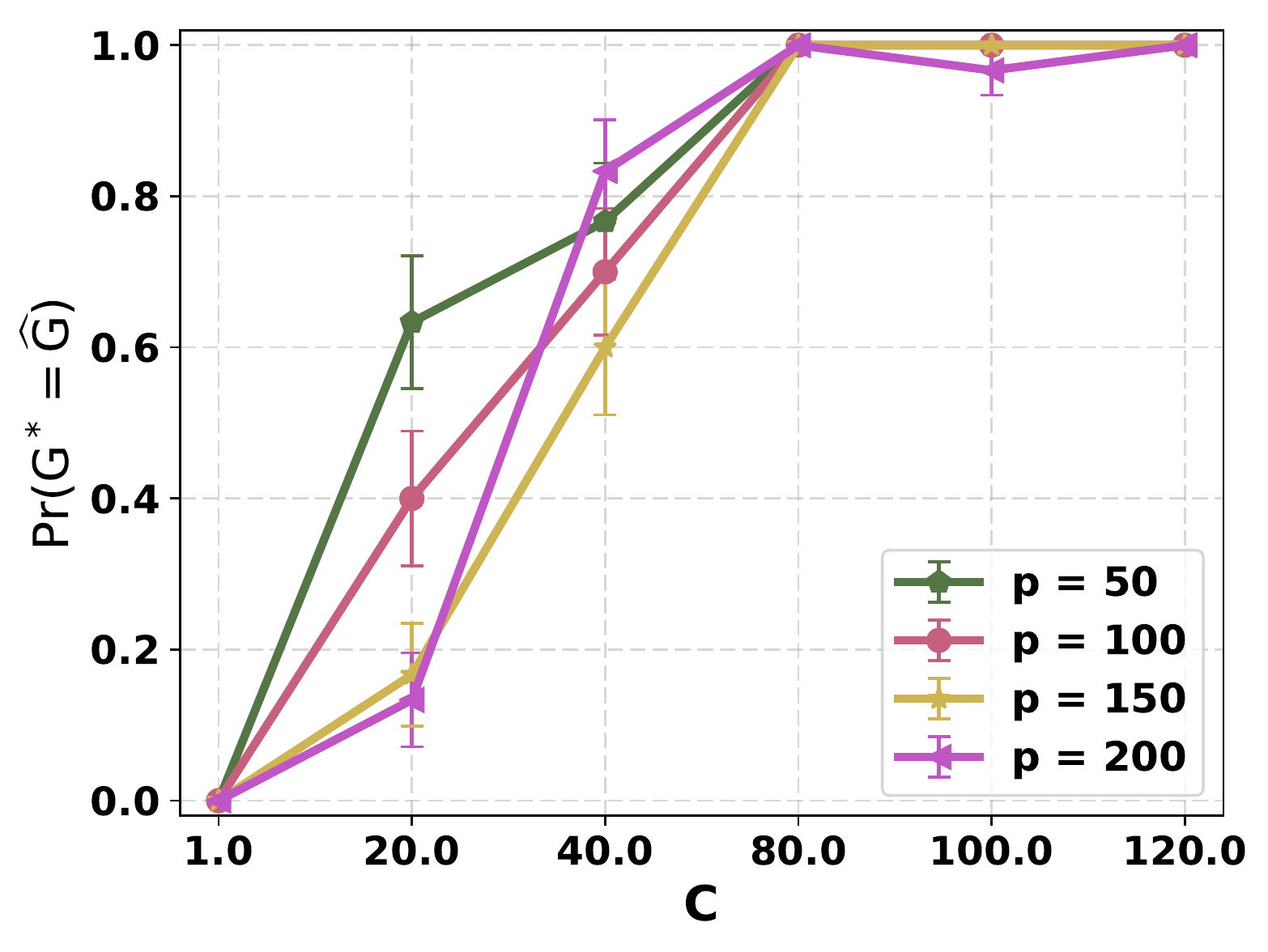}%
\includegraphics[width=0.5\linewidth]{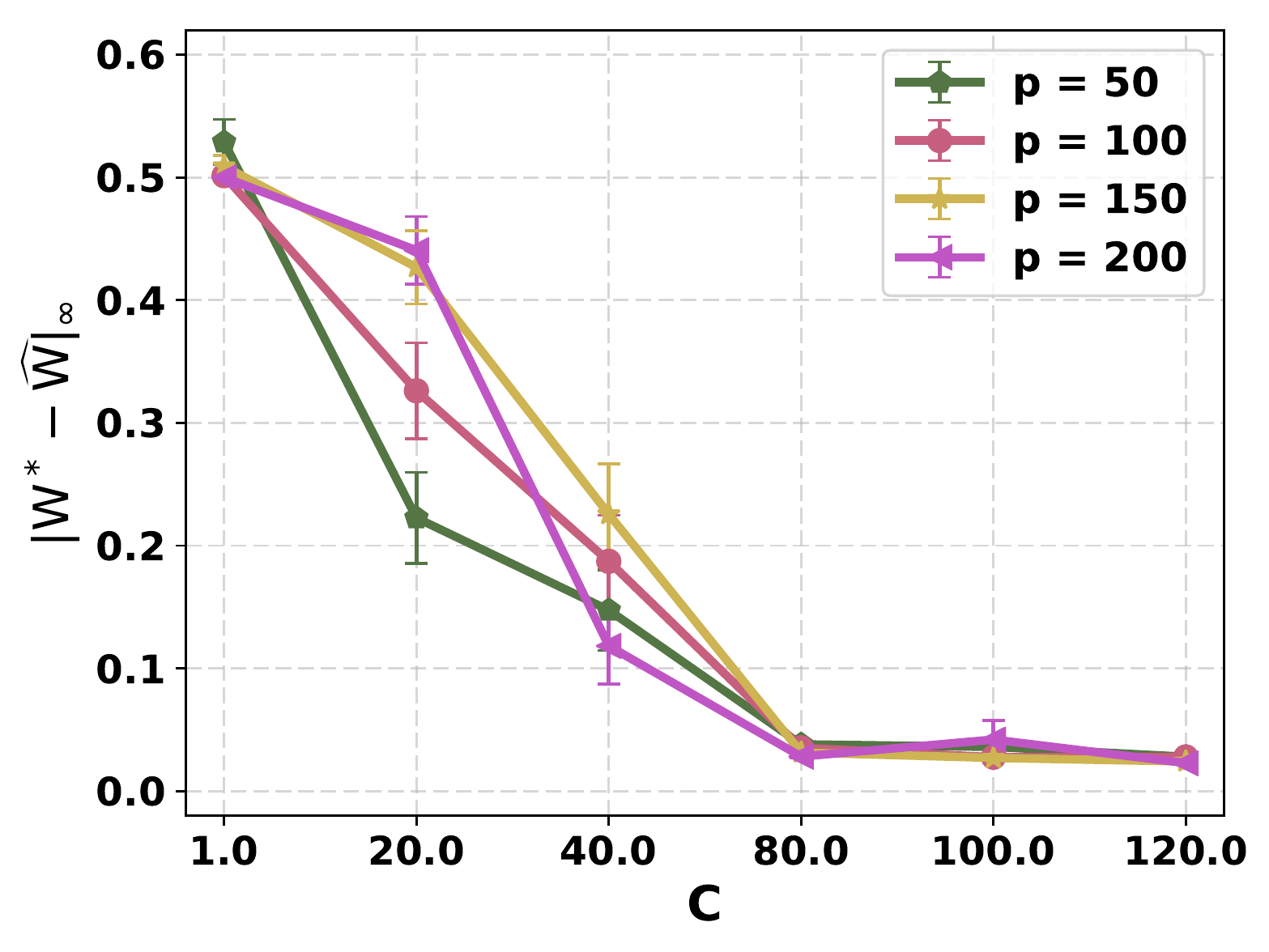}
\end{center}
\vspace*{-0.25in}
\caption{{\small (Left) Probability of correct structure recovery vs. number of samples, where the latter is set to $C k^2 \log p$ with $C$ being the control parameter and $k$ being the maximum Markov blanket size. (Right) The maximum absolute difference
between the true parameters and the learned parameters vs. number of samples. \label{fig:struct_recovery}}}
\end{figure}

In the final set of simulation experiments, we compare the
performance of our algorithm against state-of-the-art methods
for learning GBNs. Once again, we sampled DAGs according to the procedure described in the previous paragraphs. We considered three methods for comparison: 
PC algorithm for learning GBNs by \cite{kalisch_estimating_2007},
the greedy equivalence search (GES) algorithm by \cite{chickering_optimal_2003}, and the max-min hill climbing (MMHC) algorithm by \cite{tsamardinos2006max}. The first two of the three algorithms estimate the Markov equivalence class and therefore return
a completed partially directed acyclic graph (CPDAG). However, in our experiments, the sampled DAGs belong to Markov equivalence classes of size 1.
Therefore, the CPDAGs should ideally have no undirected edges. 
We do not compare against the $\ell_0$ penalized MLE algorithm by \cite{Peters2014} for the equal variance case, which is an exact algorithm, since it searches through the \emph{super-exponential} space of all DAGs and therefore does not scale beyond 20 nodes.
The GES algorithm uses the $\ell_0$-penalized Gaussian MLE score proposed by \cite{Peters2014} to greedily search for the best structure. 
We used the R package \emph{pcalg} for the implementation of the PC and GES algorithms, and the \emph{bnlearn} package for the implementation of the MMHC algorithm. MMHC and PC take an additional tuning parameter $\alpha$ which is the desired significance level for the individual conditional independence tests. We tested values of $\alpha \in \Set{0.01, 0.001, 0.0001}$ and found that $\alpha = 0.0001$ gave the best results on an average. The number of samples was set to $120 k^2 \log p$ and the regularization parameter for our method was set to $2 \sqrt{\nicefrac{(\log p)}{n}}$. We also used both the BIC score and the Bayesian Gaussian equivalent (BGe) score for the MMHC algorithm and found that BGe produced better results on an average. We computed the mean precision, recall, and running time in seconds, for each method, across 30 randomly sampled GBNs. Precision is defined as the fraction of all predicted (directed) edges that are actually present in the true DAG, while recall is defined as the fraction of directed edges in the true DAG that the method was able to recover. All methods were run on a single core of Intel$^{\tiny{\textregistered}}$ Xeon$^{\tiny{\textregistered}}$ running at 3.00 Ghz. The results are shown in Table \ref{tab:comparison}. We can see that our method outperforms existing methods in terms of precision and recall. Moreover, our method, which we implemented in Python, is the fastest among all methods for $p \leq 100$, and is always faster than MMHC. Among, MMHC, GES and PC, the PC algorithm performed the best since it is an exact algorithm. However, the PC algorithm failed to direct many edges as is evident from its low precision score. Note that we used the function \emph{udag2pdag} in the R package \emph{pcalg}, to convert the undirected skeleton returned by the PC algorithm to a CPDAG.
Please see Appendix \ref{app:experiments} for experiments on real-world data,
non-equal variances, and a comparison of our algorithm with the PC algorithm on
a non-faithful DAG.
\begin{table}[htbp]
\centering
\begin{small}
\begin{tabular}{cccc}
\hline
Method & Precision & Recall & Seconds \\

\hline
\multicolumn{4}{c}{p = 50} \\
\hline
PC & 0.587 $\pm$ 0.015 & 0.996 $\pm$ 0.004 & 0.177 $\pm$ 0.013  \\
GES & 0.206 $\pm$ 0.014 & 0.396 $\pm$ 0.031 & 0.206 $\pm$ 0.025 \\
MMHC & 0.581$\pm$ 0.038 & 0.583$\pm$ 0.038 & 0.460$\pm$ 0.049 \\
Ours & \textbf{ 1.000 $\pm$ 0.000} & \textbf{1.000 $\pm$ 0.000} & \textbf{0.089 $\pm$ 0.005} \\

\hline
\multicolumn{4}{c}{p = 100} \\
\hline	
PC	& 0.587 $\pm$ 0.008 & 0.999 $\pm$ 0.001 & 0.570 $\pm$ 0.044 \\
GES & 0.204 $\pm$ 0.013 & 0.372 $\pm$ 0.020 & 0.557 $\pm$ 0.045 \\
MMHC & 0.529$\pm$ 0.019 & 0.533$\pm$ 0.019 & 1.417$\pm$ 0.141 \\
Ours & \textbf{ 1.000 $\pm$ 0.000} & \textbf{1.000 $\pm$ 0.000} & \textbf{0.534 $\pm$ 0.004} \\

\hline
\multicolumn{4}{c}{p = 150} \\
\hline
PC & 0.572$\pm$ 0.006 & 0.996$\pm$ 0.002 & 1.392$\pm$ 0.043 \\
GES & 0.162$\pm$ 0.009 & 0.333$\pm$ 0.017 & \textbf{1.031$\pm$ 0.036	} \\	
MMHC & 0.566$\pm$ 0.014 & 0.577$\pm$ 0.015 & 2.934$\pm$ 0.241 \\
Ours & \textbf{1.000$\pm$ 0.000} & \textbf{1.000$\pm$ 0.000} & 1.988$\pm$ 0.010 \\

\hline
\multicolumn{4}{c}{p = 200} \\
\hline
PC  & 0.573$\pm$ 0.005 & 0.997$\pm$ 0.001 & 1.876$\pm$ 0.080 \\
GES & 0.143$\pm$ 0.005 & 0.310$\pm$ 0.011 & \textbf{1.610$\pm$ 0.077} \\	
MMHC & 0.582$\pm$ 0.012 & 0.593$\pm$ 0.012 & 5.511$\pm$ 0.355 \\
Ours & \textbf{1.000$\pm$ 0.000} & \textbf{1.000$\pm$ 0.000} & 5.130$\pm$ 0.030 \\
\hline		
\end{tabular}
\end{small}
\vspace*{-0.05in}
\caption{\small Performance of different algorithms across 30 randomly sampled GBNs for each value of $p \in \Set{50, 100, 150, 200}$. Numbers in bold are the best for each metric across different algorithms. Our method always recovers the true DAG structure exactly. Each sampled GBN belonged to a Markov equivalence class of size 1. \label{tab:comparison}}	
\end{table}

\vspace*{-0.2in}
\paragraph{Concluding remarks.}
There are several ways of extending our current work. While the algorithm developed in the paper is specific to equal noise-variance case, we believe our theoretical analysis can be extended to the non-identifiable case to show that our algorithm, under some suitable conditions, can recover one of the Markov-equivalent DAGs. It would be also interesting to explore if some of the ideas developed herein can be extended to binary or discrete Bayesian networks.

\begin{small}
\bibliographystyle{natbib}
\bibliography{paper}
\end{small}
\clearpage

\begin{appendices}
\section{Detailed Proofs}
\label{app:detailed_proofs}
\begin{proof}[Proof of Lemma \ref{lemma:theta}]
Consider the conditional distribution of $X_i | (X_{\mi} = \vx_{\mi})$. From standard results for Gaussians (see e.g., Chapter 2 of \cite{Bishop2006}), we have that:
\begin{gather}
X_i | (X_{\mi} = \vx_{\mi}) = \vth_i^T \vx_{\mi} + \varepsilon'_i, \text{ where }\\
\vth_i = \mSig_{i,\mi}(\mSig_{\mi,\mi})^{-1}  = -\frac{\mOmg_{i,\mi}}{\Omega_{i,i}} 
\text{ and } \varepsilon'_i \sim \mathcal{N}(0, \Omega_{i,i}^{-1}).
\end{gather}
From \eqref{eq:precision_mat} we have that:
\begin{align}
 \Omega_{i,j} &= \frac{1}{\sigma^2}(\mI_{*i} - \mB_{*i})^T(\mI_{*j} - \mB_{*j}) \notag \\
	&= \frac{1}{\sigma^2}(\mB_{*i}^T\mB_{*j} - B_{i,j} - B_{j,i})  \quad (\forall j \in \mi), \label{eq:omega_ij} \\ 
\Omega_{i,i} &= \frac{1}{\sigma^2}(\mI_{*i} - \mB_{*i})^T(\mI_{*i} - \mB_{*i}) =  \frac{1}{\sigma^2} (1 + \mB_{*i}^T\mB_{*i}), \label{eq:omega_ii}
\end{align}
where in \eqref{eq:omega_ij} we used the fact that $\mI_{*j}$ is a vector of all zeros except for a one at the $j$-th index
and in \eqref{eq:omega_ii} we used the fact that $\mB_{*i}^T \mI_{*i} = B_{i,i} = 0$. Combining \eqref{eq:omega_ij} and \eqref{eq:omega_ii} we prove our claim.
\end{proof}

\begin{proof}[Proof of Lemma \ref{lemma:terminal_vertex}]
The forward direction ($\Rightarrow$) follows directly from \eqref{lemma:theta} and the fact that for a terminal vertex $i$, $\mB_{*,i} = \vect{0}$. 

Now consider the reverse direction ($\Leftarrow$). In the first case, we have $\vth_i = - \sigma^2 \mOmg_{i,*} \neq \vect{0}$. Then, there exists a $j \in \mi$ such that $\theta_{ij} = - \sigma^2 \Omega_{i,j} \neq 0$, which implies, from Lemma \ref{lemma:theta}, that $\mB_{*,i} = \vect{0}$ and therefore $i$ is a terminal vertex. 

In the second case, we have $\vth_i = - \sigma^2 \mOmg_{i,*} = \vect{0}$.
We will proceed with a proof by contradiction. Assume that $i$ is not a terminal vertex. Then, there exists an edge $(j, i) \in \Es$. Further, since $\vth_i = \vect{0}$, we must have, from Lemma \ref{lemma:theta}, that $B_{i,j} + B_{j,i} = \mB^T_{*,i} \mB_{*,j} \neq 0$.
Therefore, nodes $i$ and $j$ must have common children. Denote the set of common children of $i$ and $j$ by $\mathsf{C} \defeq \Chi{i}{} \intersection \Chi{j}{}$.
There must be a node $k \in \mathsf{C}$ such that nodes $i$ and $k$ in turn do not have any common children, otherwise
the DAG $\G$ must have a cycle. Now if $i$ and $k$ do not have any common children, then $\theta_{ik} = -\sigma^2 \Omega_{i,k} \neq 0$,
which is a contradiction. Therefore, $i$ must be a terminal vertex.
\end{proof}

\begin{proof}[Proof of Lemma \ref{lemma:inv_cov}]
From Theorem 6 of \cite{cai_constrained_2011} we get that 
$|\mtOmg - \mhOmg|_{\infty} \leq 4 \NormI{\mtOmg} \lambda_n \leq  \nicefrac{\alpha}{\sigma^2}$,
if $\lambda_n \leq \nicefrac{\alpha}{4 \sigma^2 \NormI{\mtOmg}}$. The lower bound requirement on $\lambda_n$ comes from Theorem 6 of \cite{cai_constrained_2011}: $\lambda_n \geq \NormI{\mtOmg} |\mSig^* - \mSig^n|_{\infty}$.

Next, we show that the empirical covariance matrix $\mSig^n$
is concentrated around the true covariance matrix $\mSig^*$, elementwise, by using the results of \cite{ravikumar_high-dimensional_2011}. Note that $\nicefrac{X_i}{\sqrt{\mSig^*_{i,i}}} \sim \mathcal{N}(0, 1)$. Therefore, from Lemma 1 of \cite{ravikumar_high-dimensional_2011}, we have for a fixed $i$ and $j$:
\begin{align*}
\Prob{|\mSig^*_{i,j} - \mSig^n_{i,j}| \geq \varepsilon'} &\leq 4 \exp \left\{\frac{-n \varepsilon'^2}{C_1}\right\}.
\end{align*}
Therefore, by a union bound over all entries of $\mSig^n$, we have:
\begin{align*}
\implies \Prob{|\mSig^* - \mSig^n|_{\infty} \leq \varepsilon'} &\geq 1 - 4 p^2 \exp \left\{\frac{-n \varepsilon'^2}{C_1}\right\}.
\end{align*}
By setting $4 p^2 \exp(\nicefrac{-n \varepsilon'^2}{C_1}) = \delta$ and solving for $\varepsilon'$ we get that
the following holds with probability at least $1 - \delta$:
\begin{align*}
|\mSig^* - \mSig^n|_{\infty} &\leq \sqrt{(\nicefrac{C_1}{n}) \log \Bigl(\frac{4p^2}{\delta}\Bigr)}
\end{align*}
The lower bound on the number of samples comes from ensuring that lower bound on $\lambda_n$ is less than the upper bound $\nicefrac{\alpha}{4 \sigma^2 \NormI{\mtOmg}}$, i.e., $\NormI{\mtOmg} \sqrt{(\nicefrac{C_1}{n}) \log (\nicefrac{4p^2}{\delta})} \leq \nicefrac{\alpha}{4 \sigma^2 \NormI{\mtOmg}}$.
\end{proof}

\begin{proof}[Proof of Lemma \ref{lemma:reg_coeffs}]
Let $\mSig^n \defeq (\nicefrac{1}{n}) \mX^T \mX$, be the sample covariance matrix. We first
lower bound the minimum eigenvalue of the sample covariance matrix, $\eigmin(\mSig^n_{\Si, \Si})$, which will
be used later on in the proof. For the purpose of this proof, we will simply write $\Ss$ instead of $\Si$,
since we will derive our results for the $i$-th node for any $i \in [p]$.
\begin{align}
\eigmin(\mSig^n_{\Ss, \Ss}) = \min_{\NormII{\vy} = 1} \frac{1}{n} \NormII{(\mX_{*,\Ss}) \vy}^2 \geq \frac{\smin^2(\mX_{*,\Ss})}{n}, %
\label{eq:min_eig1}
\end{align}
where $\smin(.)$ (respectively $\smax(.)$) denotes the minimum (respectively maximum) singular value. 
Now note that for any $l \in [n]$, the $\Abs{\Ss}$-dimensional
vector $\mX_{l,\Ss} (\mSig^*_{\Ss,\Ss})^{-\nicefrac{1}{2}}$ is drawn from an isotropic Gaussian distribution.
Therefore, from Theorem 5.39 of \cite{vershynin_introduction_2010} we have:
\begin{align*}
\smin(\mX_{*,\Ss} (\mSig^*_{\Ss,\Ss})^{-\nicefrac{1}{2}}) \geq \sqrt{n} - C \sqrt{\Abs{\Ss}} - t,
\end{align*}
with probability at least $1 - 2 \exp (-ct^2)$, where $C$ and $c$ are absolute constants that depend
only on the sub-Gaussian norm $\Norm{X_{\Ss} (\mSig^*_{\Ss,\Ss})^{-\nicefrac{1}{2}})}_{\psi_2}$. 
Next, using the fact that 
$\smin(\mX_{*,\Ss} (\mSig^*_{\Ss,\Ss})^{-\nicefrac{1}{2}}) \leq \smin(\mX_{*,\Ss}) \smax((\mSig^*_{\Ss,\Ss})^{-\nicefrac{1}{2}})$,
we get:
\begin{align}
\smin(\mX_{*, \Ss}) &\geq \frac{\sqrt{n} - C \sqrt{\Abs{\Ss}} - t}{\smax((\mSig^*_{\Ss,\Ss})^{-\nicefrac{1}{2}}))} \notag \\
&= \smin((\mSig^*_{\Ss,\Ss})^{-\nicefrac{1}{2}}))(\sqrt{n} - C \sqrt{\Abs{\Ss}} - t). \label{eq:min_eig2}
\end{align}
Finally, from \eqref{eq:min_eig1} and \eqref{eq:min_eig2}, we have that:
\begin{align}
\eigmin(\mSig^n_{\Ss,\Ss}) &\geq \eigmin(\mSig^*_{\Ss,\Ss}) \left(1 - C\sqrt{\frac{\Abs{\Ss}}{n}} - \frac{t}{\sqrt{n}} \right)^2 \notag \\
  & \geq \frac{\eigmin(\mSig^*_{\Ss,\Ss})}{4} \label{eq:min_eig3}
\end{align}
with probability at least $1 - 2 \exp(-cn)$, where $c$ is an absolute constant, and the second line follows from controlling the second term in side the parenthesis to be at most $\nicefrac{1}{2}$.

Next, from the normal equations of least squares, we have that $\vhth^i_{\Ss} = (\mSig^n_{\Ss, \Ss})^{-1} \mSig^n_{\Ss, i}$,
while the true coefficient vector satisfies: $\vth^i_{\Ss} = (\mSig^*_{\Ss, \Ss})^{-1} \mSig^*_{\Ss, i}$.
For notational simplicity, let us write $\vth_{\Ss}$ and $\vhth_{\Ss}$, respectively, instead of $\vth^i_{\Ss}$ and $\vhth^i_{\Ss}$.
From the entry-wise tail bounds for the sample covariance matrix derived by \cite{ravikumar_high-dimensional_2011},
we have that:
\begin{align}
\NormInfty{\mSig^*_{\Ss, \Ss} \vth_{\Ss} - \mSig^n_{\Ss, \Ss} \vhth_{\Ss}} = \NormInfty{\mSig^*_{\Ss, i} - \mSig^n_{\Ss, i}} \leq \varepsilon', %
\label{eq:dels_1}
\end{align}
with probability at least $1 - 4 \Abs{\Ss}^2 \exp(\nicefrac{(-n\varepsilon'^2)}{C_1})$.
Let $\dels \defeq \vhth_{\Ss} - \vth_{\Ss}$. Then, using the reverse triangle inequality we get:
\begin{align}
&\NormInfty{\mSig^*_{\Ss, \Ss} \vth_{\Ss} - \mSig^n_{\Ss, \Ss} \vhth_{\Ss}}
\notag \\
 &\qquad = \NormInfty{(\mSig^*_{\Ss, \Ss} - \mSig^n_{\Ss, \Ss}) \vth_{\Ss} - \mSig^n_{\Ss, \Ss} \dels} \notag \\
&\qquad \geq \NormInfty{\mSig^n_{\Ss, \Ss} \dels} - \Abs{\Ss} \varepsilon' \NormInfty{\vth_{\Ss}}. \label{eq:dels_2}	
\end{align}
Next, from \eqref{eq:dels_1} and \eqref{eq:dels_2} we get:
\begin{gather*}
\NormII{\mSig^n_{\Ss, \Ss} \dels} \leq \Abs{\Ss}^{\nicefrac{3}{2}} \varepsilon'(\NormInfty{\vth_{\Ss}} + \nicefrac{1}{\Abs{\Ss}}) \\
\implies \NormII{\dels} \leq 
	\frac{\Abs{\Ss}^{\nicefrac{3}{2}} \varepsilon'(\NormInfty{\vth_{\Ss}} + \nicefrac{1}{\Abs{\Ss}})}{\eigmin(\mSig^n_{\Ss, \Ss})} \\
\implies \NormInfty{\dels} \leq 
	\frac{4 \Abs{\Ss}^{\nicefrac{3}{2}} \varepsilon'(\NormInfty{\vth_{\Ss}} + \nicefrac{1}{\Abs{\Ss}})}{\eigmin(\mSig^*_{\Ss, \Ss})} 
	\leq \alpha,
\end{gather*}
with probability at least $1 - 4\Abs{\Ss}^2 \exp\bigl(- \frac{n\, c \,\alpha \eigmin(\mSig^*_{\Ss,\Ss})}
		{\Abs{\Ss}^{\nicefrac{3}{2}} (\NormInfty{\vth_{\Ss}} + \nicefrac{1}{\Abs{\Ss}})} \bigr)$,
where the second line follows from the fact that $\mSig^n_{\Ss, \Ss}$ is full rank (with high probability),
and the last line follows from \eqref{eq:min_eig3} and the fact that $\NormInfty{.} \leq \NormII{.}$.
Finally, by controlling the probability of error to be at most $\delta$, we derive the lower bound on the
number of samples.
\end{proof}

\begin{proof}[Proof of Lemma \ref{lemma:assumptions}]
Let $\mB$ be the weight matrix corresponding to the edge weights $\Ws$, and let $\mB' = \mB_{\mi, \mi}$ 
denote the weight matrix corresponding to the edge weights $\Ws'$. Consider any topological order $\tau \in \Ts_{\G}$.
We will denote by $(i)_{\tau}$ the $i$-th node in the toplogical order $\tau \in \Ts_{\G}$.
The joint distribution over
$(\mX_{*,(1)_{\tau}}, \ldots, \mX_{*,(p)_{\tau}})$ is given by a linear SEM where $\mX_{*,(i)_{\tau}}$ depends only on the 
variables occurring before the variable $(i)_{\tau}$ in the topological order $\tau$:
\begin{align*}
\mX_{*,(i)_{\tau}} = \sum_{j=1}^{i-1} B_{(i)_{\tau}, (j)_{\tau}} \mX_{*,(j)_{\tau}} + \varepsilon,
\end{align*}
with $\varepsilon \sim \mathcal{N}(0, \sigma^2)$. Therefore, if we remove a terminal vertex, then the linear equations that describe the remaining variables do not change. Thus, if 
$\mOmg'$ and $\mSig'$ denote the precision and covariance matrix after removing node $i$, which is
a terminal node, then:
\begin{align*}
\mOmg' &= \frac{1}{\sigma^2} (\mI - \mB')^T(\mI - \mB') \\
&= \frac{1}{\sigma^2} (\mI - \mB_{\mi, \mi})^T(\mI - \mB_{\mi, \mi}) \\
\mSig' &= \sigma^2 (\mI - \mB')^{-1}(\mI - \mB')^{-T} \\
&= \sigma^2 (\mI - \mB_{\mi, \mi})^{-1}(\mI - \mB_{\mi, \mi})^{-T}.
\end{align*}
The fact that $\Pf(\Ws', \sigma^2)$ is causal minimal (Assumption \ref{ass:causal_minimality}) and $\alpha$-RSAF 
(Assumption \ref{ass:faithfulness}) is self evident.
Next, using the fact that $\mSig' = \mSig_{\mi, \mi}$, we have:
\begin{align*}
0 < \eigmin(\mSig) &=
 \min_{\Set{\vy \in \R^p | \vy^T \vy = 1}} \vy^T \mSig \vy \\
 &\leq 
 \min_{\Set{\vy \in \R^p | \vy^T \vy = 1 \Land y_i = 0}} \vy^T \mSig \vy \\
&= \min_{\vy \in \R^{p-1}} \vy^T \mSig' \vy = \eigmin(\mSig').
\end{align*}
This proves that the distribution $\Pf(\Ws', \sigma^2)$ is non-singular (Assumption \ref{ass:nonsingular}). Finally, the precision matrix and the covariance matrix for $X_{\mi}$ is given by $\mOmg' = \mOmg - (\nicefrac{1}{\Omega_{i,i}}) \mOmg_{*,i} \mOmg_{i,*}$ and $\mSig' = \mSig_{\mi, \mi}$ respectively, which follows from standard results for marginalization of multivariate Gaussian distribution (see for instance Chapter 2 of \cite{Bishop2006}).
\end{proof}

\begin{proof}[Proof of Theorem \ref{thm:main}]
First note that the lower bound on the number of samples given by Theorem \ref{thm:main} subsumes the sample complexity requirement of inverse covariance estimation in Lemma \ref{lemma:inv_cov} 
ordinary least squares in Lemma \ref{lemma:reg_coeffs}. Next, by Assumption
\ref{ass:faithfulness}, we have that $\forall i \in [p],\: \Sh_i = \Ss_i$, with probability at least $1 - \delta$. Therefore, from Lemma \ref{lemma:reg_coeffs} $\NormInfty{\vth^i_{\Ss_i} - \vhth^i_{\Sh_i}} \leq \alpha$,
with probability at least $1 - 2 \delta$.

Next, from Lemmas \ref{lemma:inv_cov} and \ref{lemma:reg_coeffs}, and by our assumption that
$\Abs{\wtl_{i,j}} \geq 3 \alpha$, we have that 
for a terminal vertex $i$, the ratio $r_i$ is upper bounded as follows:
\begin{align*}
r_i &\leq \frac{\Abs{\wtl_{i,j}} + \alpha}{\sigma^2(\Abs{\wtl_{i,j}} - \alpha)} \\
	&\leq \frac{4 \alpha}{\sigma^2 (2 \alpha)} = \frac{2}{\sigma^2},
\end{align*}
where the second line follows from the fact that $\frac{\Abs{\wtl_{i,j}} + \alpha}{\sigma^2(\Abs{\wtl_{i,j}} - \alpha)}$
is a decreasing function of $\Abs{\wtl_{i,j}}$.
Similarly, if $i$ is a non-terminal vertex and has $c_i$ children, then the ratio
is lower bounded as follows:
\begin{align*}
r_i \geq \left( \frac{1}{\sigma^2} \right) 
	\frac{\Abs{\wtl_{i,j}} - \alpha}{\frac{\Abs{\wtl_{i,j}}}{1 + \NormII{\vw_{*,i}}^2} + \alpha} 
\end{align*}
In order for our algorithm  to correctly identify a terminal vertex in line $\ref{line:term_vertex_1}$,
we need to ensure that the lower bound on $r_i$ for a non-terminal vertex is strictly large than
the upperbound on $r_i$ for a terminal vertex. Therefore, we need to ensure that:
\begin{gather*}
\left( \frac{1}{\sigma^2} \right) \frac{\Abs{\wtl_{i,j}} - \alpha}{\frac{\Abs{\wtl_{i,j}}}{1 + \NormII{\vw_{*,i}}^2} + \alpha} 
 > \frac{2}{\sigma^2} 
\end{gather*}
Let $c_i$ be the number of children of the $i$-th node.
Then, using the fact that $\NormII{\vw_{*,i}}^2 \geq 9 c_i \alpha^2$i,  and the function on the
left hand side of the inequality above is an increasing function of $\Abs{\wtl_{i,j}}$, this further simplifies to 
\begin{align*}
\Abs{\wtl_{i,j}} - \alpha > 2 \left(\frac{\Abs{\wtl_{i,j}}}{1 + 9c_i \alpha^2} + \alpha \right) \\
\implies \Abs{\wtl_{i,j}} > \frac{3 \alpha}{1 - \frac{2}{1 + 9 c_i \alpha^2}}.
\end{align*}
Therefore, by Assumption \ref{ass:faithfulness} (ii), in line \ref{line:term_vertex_1} of Algorithm \ref{alg:main} 
we correctly identify a terminal vertex with probability at least $1 - 3 \delta$. 
Using an union bound over the $p-1$ iterations we conclude that, with probability 
at least $1 - 3 (p-1) \delta$, Algorithm \ref{alg:main} recovers a correct causal ordering of the nodes. 

Next in line \ref{line:theta}, the true coefficient vector satisfies: 
$\vth^* = \mSig_{z_i, \Sh_{z_i}}(\mSig_{\Sh_{z_i}, \Sh_{z_i}})^{-1} = 
\frac{\mbrOmg_{z_i, \Sh_{z_i}}}{\bar{\Omega}_{z_i, z_i}}$, where $\mbrOmg$
denotes the inverse covariance matrix over $X_{\Set{z_i} \union \Sh_{z_i}}$.
From the fact that, a node
is independent of its non-descendants given its parents, the non-zero entries of 
$\vth^*$ correctly identifies the parent set of $z_i$. 
Therefore, by RSAF (Assumption \ref{ass:faithfulness}), which states that the absolute value of
 the minimum non-zero entry in $\mbrOmg$ is at least $3 \alpha$,
we have that the support of the OLS estimate $\vhth$ in line
\ref{line:theta} correctly recovers the parent set for $z_i$ with high probability, i.e., 
$\Prob{\hPar{z_i}{} \neq \Par{z_i}{\G^*}} \leq 3 (p-1) \delta$.

Finally, from Lemma \ref{lemma:reg_coeffs} and another union bound over $p-1$ iterations of learning the parameters of the GBN, we get that $\Abs{\mB^* - \mhB}_{\infty} \leq \alpha$
with probability at least $1 - 6 (p-1) \delta$. Together with condition (i) of Assumption \ref{fig:faithfulness}, this implies $\widehat{\Es} = \Es^*$ with probability at least
$1 - 6 (p-1) \delta$. Setting $6 (p-1) \delta = \delta'$ for some 
$\delta' \in (0, 1)$ we prove our claim.
\end{proof}

\section{Additional Experiments}
\label{app:experiments}

\subsection{Our method vs PC algorithm on a non-faithful GBN}
We ran our method and the PC algorithm on the example given in Figure \ref{fig:rsaf_example}. We sampled $50000$ samples from the GBN to ensure that the CI tests used by the PC algorithm are accurate. The following figure shows, from left to right, the true graph, the graph learned by our algorithm (with edge weights rounded to two decimal places), and the graph recovered by the PC algorithm.
\begin{center}
\includegraphics[width=0.2\linewidth]{fig_rsaf}
\includegraphics[width=0.2\linewidth]{fig_rsaf}
\includegraphics[width=0.2\linewidth]{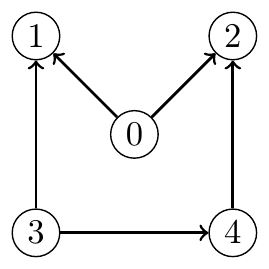}
\end{center}

\subsection{Unequal noise variance}
We set out to understand the performance of our algorithm when we relax the assumption of equal noise variance.
Clearly, in this case, we no longer have identifiability of the true DAG structure.
Therefore, we instead ask the following experimental question: ``What fraction of the true edges can we recover if we perturb the noise variance of the nodes slightly?'' For this experiment, we sampled GBNs as described in the previous paragraph. However, instead of setting the noise variance to be $0.8$ for all nodes, we set the noise variance for each node to be one of $\Set{1, 1 - \gamma, 1 + \gamma}$ with probability $\nicefrac{1}{3}$, where $\gamma$ is the noise parameter. From Figure \ref{fig:acc_recall} we note that in the regime
where the noise variance of the different nodes varies by $0.125$, i.e., between
$0.9375$ and $1.0625$, we still achieve close-to-perfect recovery.
\begin{figure}[htbp]
\begin{center}
\includegraphics[width=0.5\linewidth]{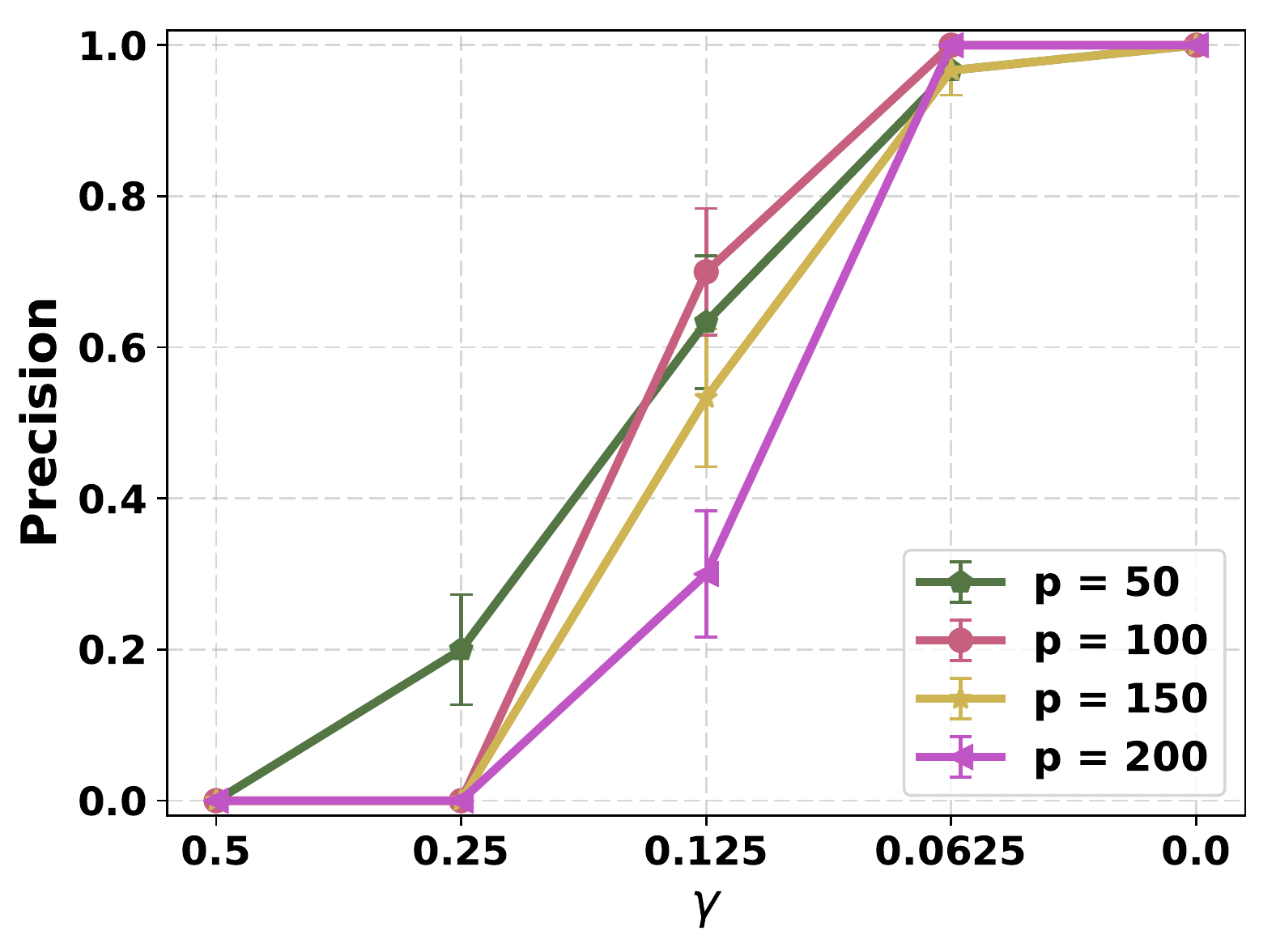}%
\includegraphics[width=0.5\linewidth]{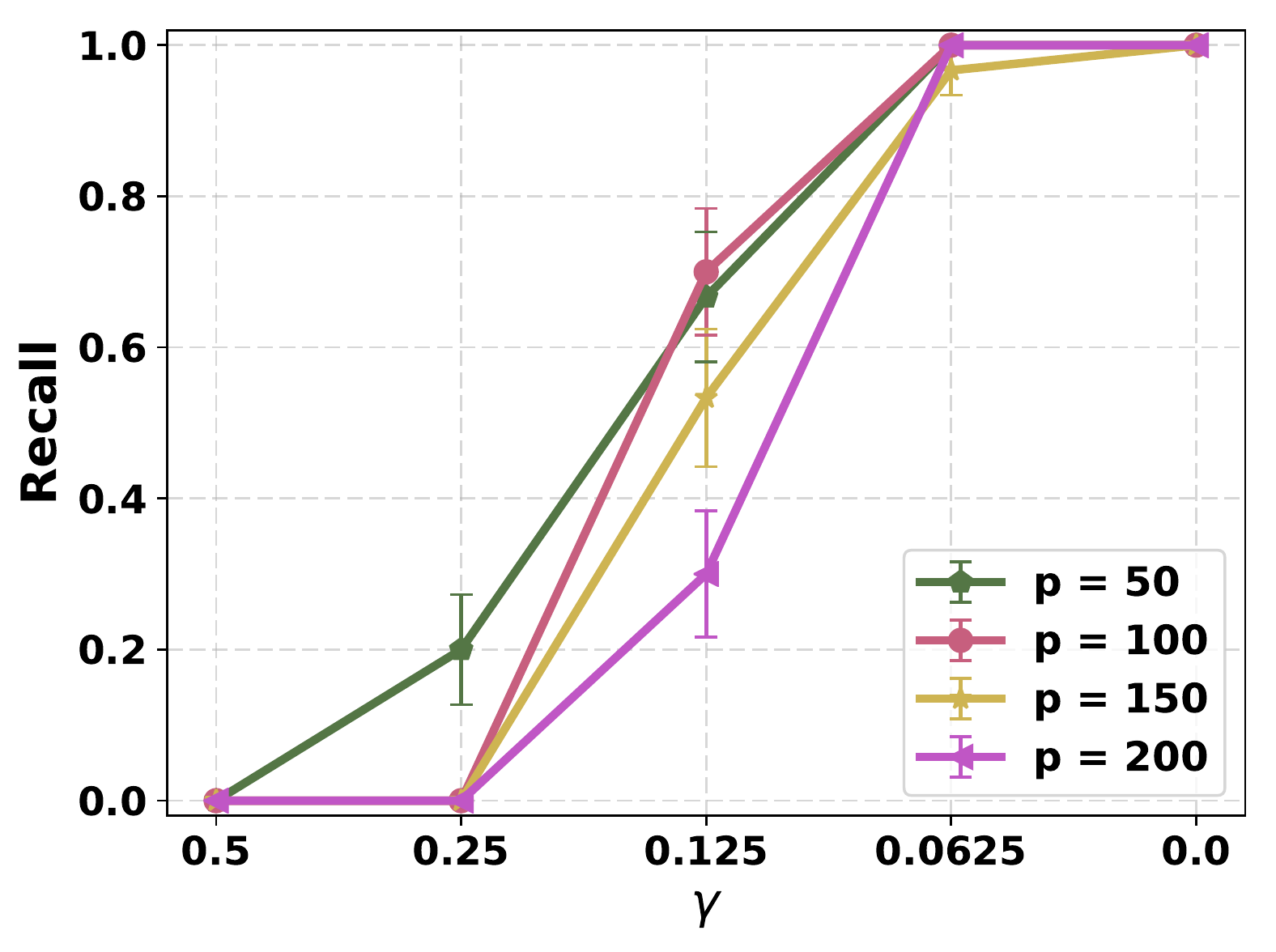}
\end{center}
\caption{Precision and Recall vs. noise parameter $\gamma$, where
the noise variance for each variable was set to one of
$\Set{1, 1 - \gamma, 1 + \gamma}$ with equal probability.
As $\gamma$ decreases, the accuracy and recall increases and we
achieve perfect recovery when $\gamma = 0$, i.e. when the variables have
equal noise variance. \label{fig:acc_recall}}
\end{figure}

\subsection{Experiments on real-world data}
We used gene expression data for $590$ subjects with breast invasive carcinoma from the \emph{cancer genome atlas} dataset.
The dataset is publicly available at \url{http://tcga-data.nci.nih.gov/tcga/}.
We used $187$ genes commonly regulated in cancer that were identified on independent datasets by~\cite{Lu07}.
The genes are the following:

\sloppy
{ \scriptsize
\vspace{0.05in}\noindent
ABCA8, ABHD6, ACLY, ADAM10, ADAM12, ADHFE1, AGXT2, ALDH6A1, 
ANK2, ANKS1B, ANP32E, AP1S1, APOL2, ARL4D, ARPC1B, AURKA, 
AYTL2, BAT2D1, BAX, BFAR, BID, BOLA2, BRP44L, C10orf116, 
C17orf27, C1orf58, C1orf96, C5orf4, C6orf60, C8orf76, CALU, CARD4, 
CASC5, CBX3, CCNB2, CCT5, CDC14B, CDCA7, CEP55, CHRDL1, 
CIDEA, CKLF, CLEC3B, CLU, CNIH4, DBR1, DDX39, DHRS4, 
DKFZp667G2110, DKFZp762E1312, DMD, DNMT1, DTL, DTX3L, E2F3, ECHDC2, 
ECHDC3, EFCBP1, EFHC2, EIF2AK1, EIF2C2, EIF2S2, Ells1, EPHX2, 
EPRS, ERBB4, FAM107A, FAM49B, FARP1, FBXO3, FBXO32, FEN1, 
FEZ1, FKBP10, FKBP11, FLJ11286, FLJ14668, FLJ20489, FLJ20701, FLJ21511, 
FMNL3, FMO4, FNDC3B, FOXP1, FTL, GEMIN6, GLT25D1, GNL2, 
GOLPH2, GPR172A, GSTM5, GULP1, HDGF, HIF3A, HLA-F, HLF, 
HNRPK, HNRPU, HPSE2, HSPE1, ILF3, IPO9, IQGAP3, K-ALPHA-1, 
KCNAB1, KDELC1, KDELR2, KDELR3, KIAA1217, KIAA1715, LDHD, LOC162073, 
LOC91689, LRRFIP2, LSM4, MAGI1, MORC2, MPPE1, MSRA, MTERFD1, 
NAP1L1, NCL, NDRG2, NME1, NONO, NOX4, NPM1, NR3C2, 
NRP2, NUSAP1, P53AIP1, PALM, PAQR8, PDIA6, PGK1, PINK1, 
PLEKHB2, PLIN, PLOD3, PPAP2B, PPIH, PPP2R1B, PRC1, PSMA4, 
PSMA7, PSMB2, PSMB4, PSMB8, PTP4A3, RBAK, RECK, RORA, 
RPN2, SCNM1, SEMA6D, SFXN1, SHANK2, SLAMF8, SLC24A3, SLC38A1, 
SNCA, SNRPB, SNX10, SORBS2, SPP1, STAT1, SYNGR1, TAP1, 
TAPBP, TCEAL2, TMEM4, TMEPAI, TNFSF13B, TNPO1, TRPM3, TTK, 
TTL, TUBAL3, UBA2, USP2, UTP18, WASF3, WHSC1, WISP1, 
XTP3TPA, ZBTB12, ZWILCH.
}
\fussy

After learning the DAG, we computed how many nodes are reachable from each of the $187$ nodes.
We found out that the gene CCNB2 reaches the greatest number of nodes among all genes ($163$ nodes).
Interestingly, this gene was independently found to be associated with an unfavorable outcome for breast-cancer patients in treatment~\cite{Shubbar13}.
As specifically mentioned by~\cite{Shubbar13} ``findings suggest that cytoplasmic CCNB2 may function as an oncogene and could serve as a potential biomarker of unfavorable prognosis over short-term follow-up in breast cancer''.

\subsection{Learning GBNs using marginal variance}
\label{fig:noise_var}
To ensure that the class of GBNs used in our synthetic experiments were non-trivial: meaning the marginal variance of the nodes did not give away the causal ordering, we tested another algorithm, which we will call the marginal-variance algorithm, to compute the DAG order by simply sorting the nodes according to their marginal variance. Figure \ref{fig:sort_variance} shows the probability of successful structure recovery across 30 randomly sampled GBNs, for the marginal-variance algorithm. We can observe that the marginal-variance algorithm fails to recover the DAG structure much more frequently as the number of variables grows. At $p = 200$, the algorithm fails to recover the true structure $50\%$ of the time. \vspace*{-15pt}
\begin{figure}[htbp]
\begin{center}
	\includegraphics[width=0.55\textwidth]{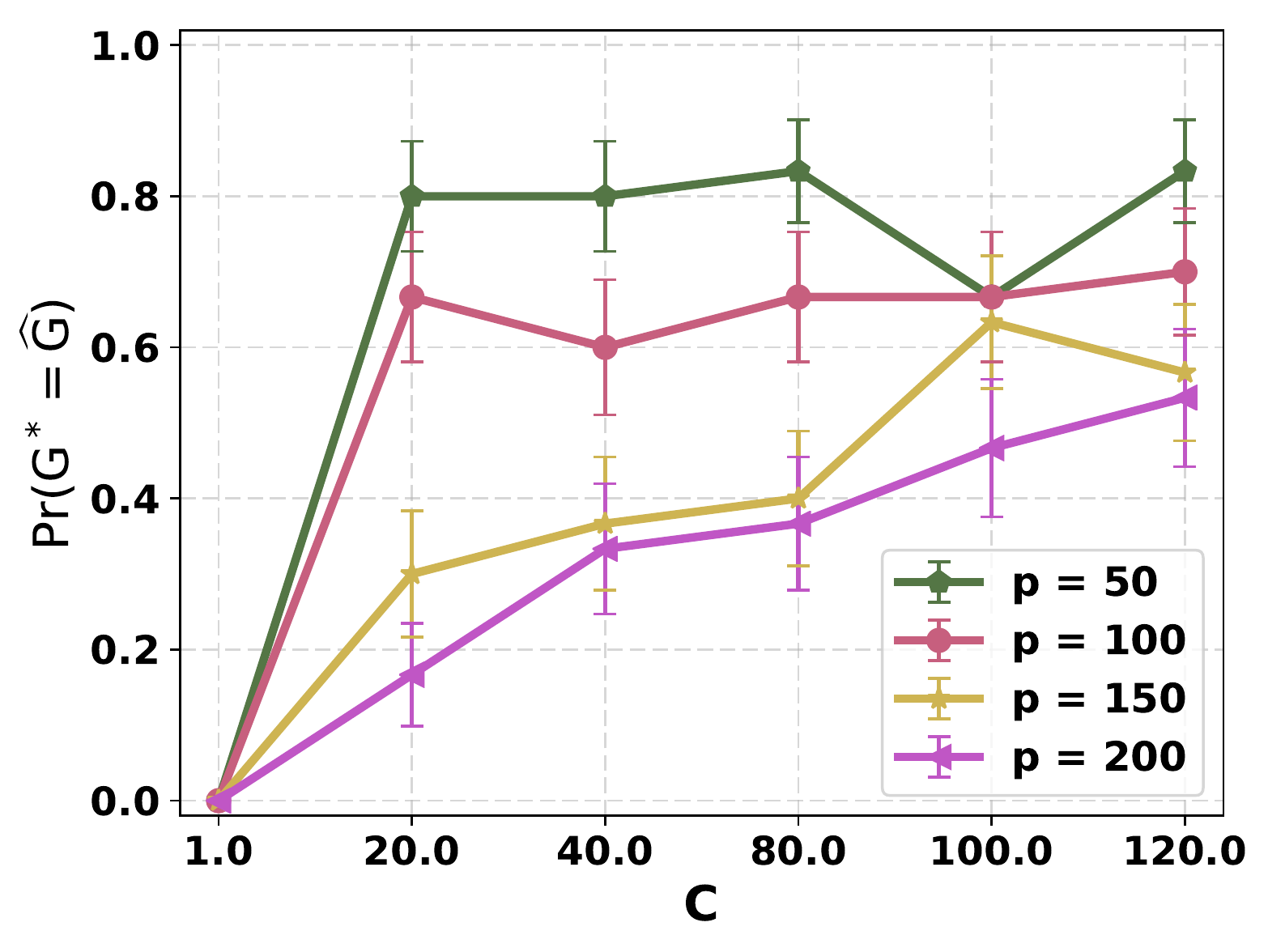}
\end{center}
\caption{Performance of the marginal-variance algorithm that uses sorting of the nodes by marginal variance to learn the DAG order.%
\label{fig:sort_variance}}
\end{figure}

\section{Discussion}
\label{app:appendix_discussion}
\subsection{Computational Complexity}
The computational complexity of our algorithm is dominated by the inverse covariance 
estimation step. As described in \cite{cai_constrained_2011}, the CLIME 
estimator of the inverse covariance matrix can be obtained by solving $p$ linear programs,
each with $2p$ inequality constraints in a $4p$-dimensional vector space. Each of these linear
programs can be solved in polynomial time by using interior point methods. Further,
state-of-the-art methods for inverse covariance estimation can potentially scale to a million
variables \cite{hsieh_big_2013}. After estimating
the inverse covariance matrix, our algorithm performs $(p - 1)$ OLS computations in 
(at-most) $\R^k$, to learn the DAG order and another $(p - 1)$ OLS computations to learn the
structure and parameters. This can be accomplished in $\BigO{p k^3}$ time
by directly inverting (at-most) $k \times k$ symmetric positive-definite matrices.
Thus, it is safe to conclude that our exact algorithm for learning equal noise-variance GBNs is highly scalable. 

\subsection{Using RESIT for learning linear Gaussian SEMs}
\begin{proposition}
\label{prop:resit_prop}
Let $(\G, \Pf(\Ws, \Ss))$ be a GBN 
and $\mX \in \R^{p}$ be a data sample drawn from $\Pf$. For
any variable $i$, let $\vth^*_i = \min_{\vth \in \R^{(p-1)}} 
	\frac{1}{2} \Exp{}{(X_{i} - \vth^T \mX_{\mi})^2}$, and let 
$R_i = X_i - (\vth^*_i)^T X_{\mi}$ be the $i$-th population residual.
Then, the residual $R_i$ is independent of $X_j$ for all $j \in \mi$, i.e.,
$\Cov{}{R_i, X_j} = 0$.
\end{proposition}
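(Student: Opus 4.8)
The plan is to exploit the first-order optimality conditions (the normal equations) of the population least-squares problem that defines $\vth^*_i$. The objective $f(\vth) = \frac{1}{2}\Exp{}{(X_i - \vth^T X_{\mi})^2}$ is a convex quadratic in $\vth$, so its minimizer $\vth^*_i$ is characterized by the stationarity condition $\Grad f(\vth^*_i) = \vect{0}$. Computing this gradient and setting it to zero will immediately force the residual $R_i = X_i - (\vth^*_i)^T X_{\mi}$ to be uncorrelated with every regressor $X_j$, $j \in \mi$, which is exactly the claimed identity.

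Concretely, I would first differentiate: $\Grad f(\vth) = -\Exp{}{(X_i - \vth^T X_{\mi}) X_{\mi}}$. Evaluating at the minimizer and setting the gradient to zero yields $\Exp{}{R_i X_{\mi}} = \vect{0}$, i.e., $\Exp{}{R_i X_j} = 0$ for every $j \in \mi$. Since the GBN has zero-mean variables ($\Exp{}{X_j} = 0$ for all $j$) and $R_i$ is a linear combination of zero-mean variables (hence $\Exp{}{R_i} = 0$), this is precisely $\Cov{}{R_i, X_j} = \Exp{}{R_i X_j} = 0$, establishing the stated covariance identity.

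To justify the stronger assertion that $R_i$ is \emph{independent} of each $X_j$, I would observe that both $R_i$ and $X_j$ are linear functions of the jointly Gaussian vector $X$, hence are themselves jointly Gaussian; for jointly Gaussian random variables, zero covariance is equivalent to independence. It is also illuminating to connect this to Lemma \ref{lemma:theta}: by uniqueness of the least-squares solution, $\vth^*_i = \mSig_{\mi, \mi}^{-1} \mSig_{\mi, i} = -\mOmg_{i, \mi}/\Omega_{i,i}$, so $(\vth^*_i)^T X_{\mi} = \Exp{}{X_i \mid X_{\mi}}$ and $R_i = X_i - \Exp{}{X_i \mid X_{\mi}}$ is the usual Gaussian conditional-expectation residual, which is known to be independent of the conditioning variables.

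There is no serious obstacle here; the computation is routine. The only points requiring care are that we work in the \emph{population} setting, where the normal equations hold exactly (rather than approximately, as for a finite sample), so orthogonality of the residual to the regressors is exact; and that joint Gaussianity is what upgrades uncorrelatedness to genuine independence. The broader significance, which motivates the proposition, is that this independence holds for \emph{every} index $i$, not merely for a terminal/sink vertex. Consequently the residual-independence test at the heart of RESIT is uninformative in the linear Gaussian case and cannot be used to recover the causal ordering.
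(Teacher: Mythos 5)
Your proof is correct, and it reaches the key identity by a somewhat different route than the paper. The paper writes the joint covariance of $(X_{\mi}, X_i)$ in block form $\matrx{\mA & \vb \\ \vb^T & c}$, quotes the closed-form population least-squares solution $\vth^*_i = \mA^{-1}\vb$, and verifies $\Cov{}{R_i, X_{\mi}} = \vb^T - \vb^T \mA^{-1}\mA = \vect{0}$ by direct substitution and cancellation. You instead invoke the first-order optimality condition of the convex quadratic objective, $\Exp{}{(X_i - (\vth^*_i)^T X_{\mi})\, X_{\mi}} = \vect{0}$, which gives orthogonality of the residual to the regressors without ever writing down $\vth^*_i$. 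The two arguments are essentially equivalent --- the normal equations are precisely what yield $\mA^{-1}\vb$ --- but yours buys a little extra generality: it does not require $\mSig_{\mi,\mi}$ (the paper's $\mA$) to be invertible, so it applies verbatim to any minimizer even when that block is singular, whereas the paper's closed form implicitly leans on Assumption \ref{ass:nonsingular}, which the proposition itself does not assume. You also make explicit a step the paper leaves tacit: the proposition asserts \emph{independence}, and your observation that $(R_i, X_j)$ are jointly Gaussian (both being linear images of the Gaussian vector $X$), so that zero covariance upgrades to independence, is what licenses the ``i.e.''\ in the statement; the paper's proof stops at the covariance computation. Your side remark that $\vth^*_i = -\mOmg_{i,\mi}/\Omega_{i,i}$ by Lemma \ref{lemma:theta}, so that $R_i$ is the usual Gaussian conditional-expectation residual, is consistent with the paper's framework and nicely ties the proposition back to that lemma.
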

A consequence of the above proposition is that, RESIT, which identifies terminal
vertices, and subsequently the DAG order, by performing independence tests between
the residual $R_i$ and the covariates $X_{\mi}$, does not work even in
the population setting.

\begin{proof}[Proof of Proposition \ref{prop:resit_prop}]
Without loss of generality, let us write the joint distribution of $(X_i, X_{\mi})$
as follows:
\begin{align*}
\matrx{X_{\mi} \\ X_i} \sim \mathcal{N} 
	\left(\vect{0}, \matrx{\mA & \vb \\ \vb^T & c} \right).
\end{align*}
Then, from standard results for ordinary least squares, we have that
\begin{align*}
\vth^*_i = \argmin_{\vth \in \R^{p-1}} 
 \Exp{}{\frac{1}{2}\NormII{\mX_{*,i} - \mX_{*,\mi} \vth}^2} = \mA^{-1} \vb.
\end{align*}
Let $R_i = X_i - \vb^T \mA^{-1} X_{\mi}$. Since both $R_i$ and $X_{\mi}$ are mean 0, we get that:
$\Cov{}{R_i, X_{\mi}} = \Exp{}{R_i X_{\mi}^T} - \Exp{}{R_i} \Exp{}{X_{\mi}^T} 
	= \Exp{}{X_i X_{\mi}^T} - \Exp{}{\vb^T \mA^{-1} X_{\mi} X_{\mi}^T} 
	= \vb^T - \vb^T \mA^{-1} \mA = \vect{0}$
\end{proof}

\end{appendices}

\end{document}